\documentclass[a4paper]{article}

\usepackage[margin=1.3in]{geometry} 
\usepackage{parskip}
\setlength{\parskip}{0.7\baselineskip}

\usepackage{pgf, tikz}
\usepackage{tkz-graph}
\usetikzlibrary{shapes.geometric}
\usetikzlibrary{backgrounds}
\usetikzlibrary{arrows.meta}
\usetikzlibrary{arrows, shapes.arrows, shapes.geometric, shapes.multipart, decorations.pathmorphing, positioning, swigs}

\usepackage[round]{natbib}
\bibliographystyle{abbrvnat}

\usepackage{amsthm}

\usepackage{amsmath}
\usepackage{mathtools}
\usepackage{centernot}
\usepackage{amssymb}
\usepackage{statmath}
\usepackage{graphicx}
\usepackage{bbm}
\usepackage{tabu}
\usepackage{dsfont}
\usepackage{paralist}
\usepackage{enumitem}
\usepackage{physics}
\usepackage{accents}
\usepackage[ruled,vlined]{algorithm2e}
\usepackage[english]{babel}
\usepackage{amsfonts}
\usepackage{booktabs}
\usepackage{tabu}
\usepackage[T1]{fontenc}
\usepackage{mathrsfs}
\usepackage{multirow}
\usepackage{comment}
\usepackage{stmaryrd}
\usepackage{tikz}
\usetikzlibrary{backgrounds}
\pgfdeclarelayer{bg1}
\pgfdeclarelayer{bg2}
\pgfsetlayers{bg2, bg1, main}
\usepackage{xcolor}
\usetikzlibrary{arrows.meta}
\usepackage{pifont}%

\usepackage{subfig}

\newcounter{thmcount}    
\newtheorem{setting}[thmcount]{Setting}
\newtheorem{theorem}[thmcount]{Theorem}
\newtheorem{remark}[thmcount]{Remark}

\newtheorem{lemma}[thmcount]{Lemma}

\newtheorem{example}[thmcount]{Example}
\newtheorem{excont}[thmcount]{Example}

\newtheorem{proposition}[thmcount]{Proposition}

\newtheorem{definition}[thmcount]{Definition}
\newtheorem{steps}{Algorithm}

\newtheorem{assumption}{Assumption}

\providecommand{\customgenericname}{}
\newcommand{\newcustomtheorem}[2]{%
  \newenvironment{#1}[1]
  {%
   \renewcommand\customgenericname{#2}%
   \renewcommand\theinnercustomgeneric{##1}%
   \innercustomgeneric
  }
  {\endinnercustomgeneric}
}

\newcustomtheorem{namedsetting}{Setting}

\newcommand\Item[1][]{%
  \ifx\relax#1\relax  \item \else \item[#1] \fi
  \abovedisplayskip=0pt\abovedisplayshortskip=0pt~\vspace*{-\baselineskip}}
\let\emptyset\varnothing

\newcommand{\ind}{\mathbbm{1}} %
\newcommand{\SD}{S^{\diamond}}
\newcommand{\calED}{\mathcal{E}^{\diamond}}

\newcommand{\calEtr}{\mathcal{E}^{\tr}}

\newcommand{\opt}{\operatorname{opt}}

\newcommand{\einv}{\operatorname{e-inv}}
\newcommand{\wrt}{w.r.t.\ }
\renewcommand{\tr}{\operatorname{tr}}

\newcommand{\tst}{\operatorname{tst}}
\renewcommand{\vec}{\operatorname{vec}}

\let\argmax\relax
\let\argmin\relax
\DeclareMathOperator*{\argmax}{argmax}
\DeclareMathOperator*{\argmin}{argmin}

\newcommand{\ci}{\mathrel{\perp\mspace{-10mu}\perp}}
\newcommand\numberthis{\addtocounter{equation}{1}\tag{\theequation}}

\DeclareMathOperator{\EX}{\mathbb{E}}%
\DeclareMathOperator{\VAR}{\mathbb{V}}%
\newcommand{\Cov}{\mathrm{Cov}}

\newcommand{\rvector}[1]{%
  \begingroup\def\arraystretch{0}\begin{bmatrix}
  #1
  \end{bmatrix}\endgroup}

\renewcommand{\P}{\mathbb{P}}

\newcommand{\R}{\mathbb{R}}

\newcommand{\PA}{\mathrm{PA}}

\captionsetup{belowskip=0pt}

\newcommand\einvce{{e-invariance}}
\newcommand\einvt{{e-invariant}}

    \def \mS {\text{$\mathbf S$}}

 \def \calE {\mathcal E}
 \def \calF {\mathcal F}
 \def \calG {\mathcal G}
 \def \calH {\mathcal H}

 \def \calT {\mathcal T}
 \def \calU {\mathcal U}

 \def \calX {\mathcal X}
 \def \calY {\mathcal Y}

\title{Effect-Invariant Mechanisms for Policy Generalization}

\author{Sorawit Saengkyongam\textsuperscript{1}\thanks{Part of this work was done while SS and JP were at the University of Copenhagen.} ,\,\,Niklas Pfister\textsuperscript{2},\, Predrag Klasnja\textsuperscript{3}, \\ Susan Murphy\textsuperscript{4}, and Jonas Peters\textsuperscript{1}\footnotemark[1]}
\date{\textsuperscript{1}ETH Zürich, \textsuperscript{2}University of Copenhagen, \textsuperscript{3}University of Michigan, \\ \textsuperscript{4}Harvard University \\[2ex]}

\begin{document}

\maketitle

\begin{abstract}
    Policy learning is an important 
    component of many real-world learning systems. A major challenge in policy learning is how to adapt efficiently to unseen environments or tasks. Recently, it has been suggested to exploit invariant conditional distributions to learn models that generalize better to unseen environments. However, assuming invariance of entire conditional distributions (which we call full invariance) may be too strong of an assumption in practice. In this paper, we introduce a relaxation of full invariance called effect-invariance (e-invariance for short) and prove that it is sufficient, under suitable assumptions, for zero-shot policy generalization. We also discuss an extension that exploits
    e-invariance when we have a small sample from the test environment, enabling few-shot policy generalization. Our work does not assume an underlying causal graph or that the data are generated by a structural causal model; instead, we develop testing procedures to test e-invariance directly from data. We present empirical results using simulated data and a mobile health intervention dataset to demonstrate the effectiveness of our approach.
\end{abstract}

\section{Introduction}
When learning models from data, 
we often use these models in scenarios that are assumed to have similar or the same characteristics as the ones generating the training data. 
This holds for 
prediction tasks such as regression and classification 
but also for settings such as contextual bandits or dynamic treatment regimes. 
When we observe different regimes under training, we can hope to exploit this information to construct models that adapt better to an unseen environment (or task). 
Such problems are usually referred to as 
multi-task learning domain adaptation or domain generalization \citep{caruana1997multitask, crammer2008learning, muandet2013domain, wang2022generalizing}; the nomenclature sometimes differs depending on whether one observes labeled and/or unlabeled data in the test domain.
For prediction tasks, it has been suggested to 
learn invariant models by exploiting invariance of the conditional distributions. Under suitable assumptions, such models generalize better to unseen environments if the changes between the environments can be modeled by interventions \citep[e.g.,][]{rojas2018invariant, Magliacane2018, Christiansen2020DG}.
A similar approach has been applied in policy learning \citep{saengkyongam2021invariant}, where one searches for policies that yield an invariant reward distribution.
We refer to the invariance of conditional distributions as `full invariance'. More precisely, given covariates $X_e$ and outcome $Y_e$ from different environments $e \in \calE$, the full invariance assumption posits the existence of a set of covariates $X_e^S$ such that 
\begin{equation}\label{eq:full_invariance}
    \text{for all } e, f \in \calE: Y_e|X_e^S \text{ and } Y_f | X_f^S \text{ are identical}.
\end{equation}
Full invariance, however, may be too strong of an assumption in practice. In prediction tasks, it has been suggested to relax the requirement of full invariance, such as vanishing empirical covariance, and instead use invariance as a form of regularization \citep[e.g.,][]{rothenhausler2021anchor, Jakobsen2020, Arjovsky2019}. This approach comes with theoretical guarantees regarding generalization to bounded interventions, for example, but these results are often limited to restricted classes of models and interventions.

In this paper, we relax the full invariance assumption in a different direction and show how it can be applied to inferring optimal conditional treatments in policy learning. We illustrate our proposed relaxation based on an example. Consider the following
class of structural causal models \citep[SCMs,][]{Pearl2009} indexed by environments $e \in \calE \coloneqq \{1, -1\}$, with the corresponding graph shown in Figure~\ref{fig:ex-intro},
\begin{figure}[t]
    \centering
    \subfloat[ Graphical representation indicating that  full invariance does not hold \label{fig:full_inv}]
    {
    \begin{tikzpicture}[node distance=1.5cm,
    thick, roundnode/.style={circle, draw, inner sep=1pt,minimum size=7mm}, squarenode/.style={rectangle, draw, inner sep=1pt, minimum size=7mm}]
    \node[roundnode] (X2) at (-1.5, 0){$X$};
    \node[roundnode][fill=black!25] (U2) at (0.75,1.5) {$U$};
    \node[squarenode] (E) at (-2.75, 0){$e$};
    \node[roundnode] (A) at (0, 0) {$T$};
        \node[roundnode] (R) at (1.5, 0) {$Y$};
    \draw[-latex] (E) edge (X2);
    \draw[-latex] (X2) edge[bend right=30] (R);
    \draw[-latex] (E) edge[bend right=40] (R);
    \draw[-latex] (X2) -- (A);
    \draw[-latex, thick] (A) -- (R);
    \draw[-latex, thick] (U2) edge (R);
    \draw[-latex, thick] (U2) edge (X2);
\end{tikzpicture}
    }
    \hfil
    \subfloat[ Graphical representation indicating that partial invariance holds \label{fig:par_inv}]
    {
    \begin{tikzpicture}[node distance=1.5cm,
    thick, roundnode/.style={circle, draw, inner sep=1pt,minimum size=7mm}, squarenode/.style={rectangle, draw, inner sep=1pt, minimum size=7mm}]
    \node[roundnode] (X2) at (-1.5, 0){$X$};
    \node[roundnode][fill=black!25] (U2) at (0.75,1.5) {$U$};
    \node[squarenode] (E) at (-2.75, 0){$e$};
    \node[roundnode] (A) at (0, 0) {$T$};
    \node[name=R,shape=swig vsplit,swig hsplit={gap=7pt}, right=.7 of A]{
    \nodepart{left}{$Y_{f}$}
    \nodepart{right}{$Y_{g}$}};
    \draw[shift=(R.center)] plot[mark=+] coordinates{(0,0)};
    \draw[-latex] (E) edge (X2);
    \draw[-latex] (X2) edge[bend right=30] (R);
    \draw[-latex] (E) edge[bend right=30] (R.320);
    \draw[-latex] (X2) -- (A);
    \draw[-latex, thick] (A) -- (R);
    \draw[-latex, thick] (U2) edge[bend left=0] (R.40);
    \draw[-latex, thick] (U2) -- (X2);
\end{tikzpicture}
    }
    \hfill
    \subfloat[ Comparing invariance tests \label{fig:full_vs_par}]
    {
    \begin{tabular}{cc}
    \toprule
    \input{figures/intro_full_vs_partial}
    \end{tabular}
    }
    \caption{
$Y$ is the outcome (or reward), $X$ and $U$ are observed and unobserved context variables, $T$ is the treatment (or action), and $e$ represents different environments. 
In example~\eqref{eq:intro-scm},
the outcome mechanism is generally not invariant (as the environment enters $Y$ directly), see (a). 
This paper introduces a type of partial invariance called e-invariance (Definition~\ref{def:par_inv_set}),
which does hold here, see (b): 
when conditioning on $X$, the treatment effect is invariant across environments. 
The concepts of the paper are applicable even if the data generating process does not allow for a graphical representation. Instead, we propose testing procedures to test for e-invariance. (Bottom) Comparing the test result obtained from one of our proposed e-invariance tests, applied to a sample taken from \eqref{eq:intro-scm}, with the result of the full invariance test. While $X$ does not satisfy the full invariance condition (as in \eqref{eq:full_invariance}), it does satisfy the e-invariance condition (as in \eqref{eq:p-inv_cond1}). 
}
\label{fig:ex-intro}
\end{figure}
\begin{equation}\label{eq:intro-scm}
\mathcal{S}(e):\quad
\begin{cases}
U\coloneqq \epsilon_U \\
X\coloneqq e U + \epsilon_X \\
T\coloneqq \ind(1 + X + \epsilon_T > 0) \\
Y\coloneqq 2e + X + U + T(1 + X) + \epsilon_Y,
\end{cases}
\end{equation}
where $(\epsilon_U, 
\epsilon_X,\epsilon_T,\epsilon_Y)$ are independent standard normal random variables. Here, $Y \in \R$ represents the outcome or reward, $T \in \{0,1\}$ corresponds to the treatment or action, and $X \in \R$ and $U \in \R$ are observed and unobserved covariates, respectively. The mechanism for $T$ can be considered as a fixed policy.
Since the environment has a direct effect on the outcome, there is no subset
satisfying the full invariance condition \eqref{eq:full_invariance}: 
regardless of whether we condition on 
$\emptyset$ or 
$\{X\}$, 
the outcome distribution is not
independent of the environment. Consequently, methods that rely on the full invariance assumption such as the one proposed by \citet{saengkyongam2021invariant} would lead to a vacuous result.

However, the criterion of full invariance is not necessary  when the goal is to learn an optimal policy.
Instead it may suffice to
find models that are \emph{partially invariant}: In the above example, see~\eqref{eq:intro-scm}, the outcome $Y$ can be additively decomposed into two components: one being a function of $U$, $e$, and $\epsilon_Y$, and another being a function of $T$ and $X$. In this case, although the outcome mechanism is not entirely invariant, it contains an invariant component. When conditioning on $X$,
 the effect of the treatment is the same in all environments. More specifically, the conditional average treatment effect does not depend on $e$, that is,
 \begin{equation}
     \forall x \in \calX: \EX^e[Y \mid X = x, T = 1] - \EX^e[Y \mid X, T = 0] = 1 + x. %
 \end{equation}
We say that  $\{X\}$ satisfies \emph{effect-invariance (\einvce{})}. 
This condition suffices that, 
for an unseen test environment, 
we can still infer the optimal treatment among 
policies that only depend on $X$ without having access to the outcome information in the test environment.
In addition, if the environments are heterogeneous enough, such a policy is worst-case optimal. We refer to this setup as zero-shot generalization. We state the class of data generating processes and provide formal results in Section~\ref{sec:partial_inv} and Section~\ref{sec:zero-shot} below.

Moreover, if we can acquire a small sample -- including observations of the outcome -- from the test environment, we would want to optimize the policy using the data from the test environment. Ideally, this optimization also leverages information from training data from other environments to improve the finite sample performance of the learnt policy. We discuss that e-invariant information can be beneficial in such settings.
We refer to this scenario as few-shot generalization and present it as an extension of the zero-shot methodology, in Section~\ref{sec:few-shot}.

While SCMs provide a class of examples satisfying the assumptions of this work,
we do not assume an underlying causal graph or SCM (but instead only require a sequential sampling procedure that ensures that the covariates $X$ causally precede the outcome).
In particular, e-invariance is not read off from a known graph but instead tested from data. 
Figure~\ref{fig:full_vs_par} illustrates the testing result obtained by applying one of the proposed e-invariance tests to a sample from \eqref{eq:intro-scm}, where we also include a comparison with the full-invariance test as proposed in \cite[Method II]{Peters2016jrssb}.

The main contributions of this paper are four fold:
\begin{itemize}
    \item[(1)] Introducing e-invariance: In Section~\ref{sec:partial_inv}, we introduce the concept of e-invariance, which offers a relaxation of the full invariance assumption. An e-invariant set ensures that the conditional treatment effect function remains the same across different environments.
    \item[(2)] Utilizing e-invariance for generalization: Section~\ref{sec:zero-shot} discusses the use of e-invariance in learning policies that provably generalize well to unseen environments.
    We prove two generalization guarantees:
    The proposed method (i) outperforms an optimal context-free policy on new environments and (ii) outperforms any other policy in terms of worst-case performance.  %
    \item[(3)] Methods for testing e-invariance: We propose hypothesis testing procedures, presented in Section~\ref{sec:infer_inv_set}, to test for e-invariance  from data within both linear and nonlinear model classes.
    \item[(4)] Semi-real-world  case study: In Section~\ref{sec:experiments}, we demonstrate the effectiveness of our proposed policy learning methods in the semi-real-world case study of mobile health interventions. An optimal policy based on an e-invariance set is shown to generalize better to new environments than the policy that uses all the context information.
\end{itemize}

\subsection{Further Related Work}
Our work builds upon the existing research that leverages the invariance of conditional distributions (full invariance) for generalization to unseen environments \citep{Schoelkopf2012icml, rojas2018invariant, Magliacane2018, Arjovsky2019, Christiansen2020DG, saengkyongam2021invariant}. Several relaxations of the full invariance have been suggested for the prediction tasks \citep{rothenhausler2021anchor, Jakobsen2020, Arjovsky2019, guo2022two}. 
In reinforcement learning, previous studies have suggested the use of invariance to achieve generalizable policies \citep{zhang2020invariant, sonar2020invariant}, however, they lack theoretical guarantees for generalization. Closely related to our work, \cite{saengkyongam2021invariant} has established the worst-case optimality of invariant policy learning based on the full invariance assumption, which may be too restrictive in practice.

Transportability in causal inference \citep[e.g.,][]{pearl2011transportability, bareinboim2014transportability, subbaswamy2019preventing} addresses the task of identifying invariant distributions based on a known causal graph and structural differences between environments, which can be used to generalize causal findings. However, our approach differs in that we do not assume prior knowledge of the causal graph or structural differences between environments. Furthermore, our methods are applicable
even if the data generating process does not allow for a graphical representation. Instead, we develop testing procedures to obtain invariant information from data. Additionally, methods based on causal graphs typically only capture full invariance information (through the Markov property), whereas our work relaxes the requirement of full invariance for policy learning.

\section{Effect-invariance}\label{sec:partial_inv}
\subsection{Multi-environment policy learning} \label{sec:mepl}
In this work, we consider the problem of multi-environment policy learning (or multi-environment contextual
bandit) \citep[see also][]{dawid2021decision, saengkyongam2021invariant}. 
Given a fixed set of environments
$\calE$, we assume that for each environment $e\in\calE$,
there is a policy learning setup, where the distributions of covariates and outcome may differ between environments. Each of the setups is modelled by a three-step sequential sampling scheme: First, covariates $(X, U)$ are sampled according to a fixed distribution depending on the
environment, then $X$ is revealed to an agent that uses it to
select a treatment $T$ (from a finite set $\mathcal{T}$) according to a policy $\pi$ and, finally, an outcome $Y$ is sampled conditionally on $X$, $U$ and $T$.
Formally, we assume the following setting throughout the paper.

\begin{setting}[Multi-environment policy learning]\label{setting:s1}
  Let $\calE \subset \R$ be a collection of environments, $Y\in\R$ an
  outcome variable, $X \in\mathcal{X}\subseteq\R^d$ observed
  covariates, $U \in\mathcal{U}\subseteq\R^p$ unobserved covariates
  and $T\in\calT = \{1,\dots,k\}$ a treatment. Let $\Delta(\calT)$ denote the probability simplex over the set of treatments $\calT$ and let
  $\Pi\coloneqq\{\pi\mid \pi:\mathcal{X} \rightarrow
  \Delta(\calT)\}$ denote the set of all policies.
  Moreover, for all $e \in \calE$ let  
  $\P^{e}_{X,U}$
  be a distribution
  on $\mathcal{X}\times\mathcal{U}$ and for all $e\in\calE$,
  $x\in\mathcal{X}$, $u\in\mathcal{U}$ and $t\in\calT$ let 
  $\P^{e}_{Y \mid X = x, U = u, T = t}$
  be a distribution 
  on $\R$. 
  Given $e\in\calE$ and $\pi\in\Pi$, this defines a random vector 
  $(Y, X, U, T)$ 
  by 
  $(X,U)\sim
  \P^{e}_{X,U}$, $T\sim\pi(X)$, and $Y \sim 
\P^{e}_{Y \mid X = X, U = U, T = T}$, see Figure~\ref{fig:ex-intro}(a) for an example.
  Correspondingly, $n$ observations $(Y_i, X_i, T_i, e_i, \pi_i)_{i=1}^n$ 
  from this model
  are generated by the following steps. 
    \begin{itemize}
  \item[(1)] Select an environment $e_i\in\calE$ and a
    policy $\pi_i\in\Pi$.
  \item[(2)] Sample covariates $(X_i,U_i)\sim
  \P^{e_i}_{X,U}$.
  \item[(3)] Sample the treatment $T_i\sim\pi_i(X_i)$.
  \item[(4)] Sample the outcome $Y_i \sim 
  \P^{e_i}_{Y \mid X = X_i, U = U_i, T = T_i}$.\footnote{Consequently, the distribution in (4) is indeed the conditional distribution of $Y$, given $X$, $U$, and $T$, justifying the notation.}
\end{itemize}
  The sampling in (2)--(4) is done independently for different $i$. 
In particular, we assume that $e_i$ and $\pi_i$ do not depend on other observables and should be considered fixed. (Our results in Section~\ref{sec:zero-shot} remain valid even if $\pi_i$ depends on previous observations $\{j: j \leq i\}$, see Remark~\ref{rem:1}.)

  Further denote by $\calEtr\subseteq\calE$
  the set of observed environments within the $n$ training observations and for each $e\in\calEtr$ we denote by
  $n_e$ the number of observations from environment $e$. 
  We assume that there exists a product measure $\nu$ such that for all $e\in\calE$
  the joint distribution of $(Y, X, U, T)$ in environment $e$, under policy $\pi$ has density $p^{e,\pi}$ with respect to $\nu$ and
  that $\P^e_X$ has full support on $\mathcal{X}$.
  Next, we define $t_0 \in \calT$ as a baseline treatment, which serves as the reference point for defining the conditional average treatment effect in \eqref{eq:cate_binary}. However, and importantly, our results hold for any choice of $t_0$.
  Finally, we assume that the policies generating the training observations are bounded, i.e., for all $i \in \{1,\dots,n\}, t \in \calT,$ and $x \in \calX$ it holds that $\pi_i(x)(t) > 0$.
\end{setting}

\paragraph{Notation} When writing probabilities and
expectations of the random variables $Y$, $X$, $U$ and $T$ or the corresponding observations,
we use
superscripts to make explicit any possible dependence on the
environment and policy, e.g., $\P^{e,\pi}$ and
$\mathbb{E}^{e,\pi}$. Moreover, by a slight abuse of notation, for a
policy $\pi\in\Pi$ with a density, 
we let $\pi(x)$ denote the density
rather than the distribution; we also use the commonly employed convention
$\pi(t|x) \coloneqq \pi(x)(t)$. Finally, for all $t\in\calT$, we
denote by $\pi_t\in\Pi$ the policy that always selects treatment $t$, that
is, $\pi_t(\cdot|x)=\ind(t=\cdot)$.

\begin{remark}\label{rem:1}
Our results in Section~\ref{sec:zero-shot} remain valid even if $\pi_i$ in Setting~\ref{setting:s1} depends on previous observations. In this case, the sampling step (3) is replaced by $T_i \sim \pi_i(X_i,H_i)$ with $H_i:=\{(X_j,Y_j,T_j): j < i$\}. %
Furthermore, in the \ref{setting:zero-shot} setting in Section~\ref{sec:zero-shot}, we consider 
  $D^{\tr} \coloneqq (y^{\tr}_i,
x^{\tr}_i, t^{\tr}_i, \pi^{\tr}_i(\cdot|\cdot,h_i), e^{\tr}_i)_{i=1}^n$, 
where
$(y^{\tr}_1,
x^{\tr}_1, t^{\tr}_1), \ldots,
(y^{\tr}_n,
x^{\tr}_n, t^{\tr}_n)
$ are (jointly independent) realizations from 
$
Q^{\tr}_1 \coloneqq \P^{e_1,\pi_1(\cdot|\cdot)}_{X, Y, T}, Q^{\tr}_2 \coloneqq \P^{e_2,\pi_2(\cdot|\cdot, h_2)}_{X, Y, T}, \ldots, 
Q^{\tr}_n \coloneqq \P^{e_n,\pi_n(\cdot|\cdot, h_n)}_{X, Y, T}$ respectively, 
with 
$h_i := \{(y^{\tr}_j,
x^{\tr}_j, t^{\tr}_j): j < i\}$ for $i \geq 2$;
in Appendix~\ref{proof:prop:gen_policy_ident}, 
we replace 
$\pi_i$ by
$\pi_i(\cdot|\cdot, h_i)$.
\end{remark}

\subsection{Invariant treatment effects}
The concept of invariance has been connected to causality
\citep{Haavelmo1944, Pearl2009, Schoelkopf2012icml}
and it has been suggested to use it for 
causal discovery 
\citep{Peters2016jrssb, Pfister2018jasa, HeinzeDeml2017} 
or distribution generalization
\citep{rojas2018invariant, rothenhausler2021anchor, Magliacane2018}.
In our setting, the standard notion of invariance 
would correspond to the invariance in the outcome mechanism \citep{saengkyongam2021invariant}.
In practice, this notion may be too strong. E.g., it 
does not hold if the environment directly influences the outcome (see Figure~\ref{fig:ex-intro} for an example). In what follows, we introduce the notion of (treatment) effect-invariance,
which relaxes the standard invariance condition.

To this end, we recall the notion of
the conditional
average treatment effect (CATE) under different environments
$e \in \calE$. 
The CATE
in environment $e\in\calE$ for a subset of covariates
$S\subseteq\{1,\ldots,d\}$ is defined for all $x\in\calX^S$
and $t\in\calT$ as
\begin{equation}\label{eq:cate_binary}
    \tau^S_e(x, t) \coloneqq \EX^{e, \pi_t}[Y \mid X^S=x] - \EX^{e, \pi_{t_0}}[Y \mid X^S=x].
\end{equation}
When $S = \{1,\dots,d\}$, we simply denote $\tau^S_e$ by $\tau_e$. In Setting~\ref{setting:s1}, the CATE functions, as defined in
\eqref{eq:cate_binary},
may differ substantially from one environment to another. But even then, there may exist a subset $S \subseteq \{1,\dots,d\}$ such that the CATE functions do not change
across environments. 
In this work, we exploit the existence of such sets, which we call \emph{e-invariant} (for effect-invariant).\footnote{As an alternative to e-invariance, one could define \emph{argmax-invariance} by requiring that $\forall e_1,e_2 \in \calED: \argmax_{t\in\calT} \tau^S_{e_1}(\cdot, t) \equiv \argmax_{t\in\calT} \tau^S_{e_2}(\cdot, t)$. 
 A similar notion called `invariant action prediction' has been introduced by \citet{sonar2020invariant}.
This condition would ensure that 
the optimal treatment is robust
with respect to changes in the environment (even though the treatment effect may not be). 
E-invariance implies argmax-invariance but the latter condition is not sufficient to show generalization properties that we develop in Section~\ref{sec:zero-shot}.}

\begin{definition}[Effect-invariant sets]\label{def:par_inv_set}
Assume Setting~\ref{setting:s1}. A subset $S \subseteq \{1,\dots,d\}$ is said to be \emph{effect-invariant} with respect to a set of environments $\calED \subseteq \calE$ (\emph{\einvt{}} \wrt $\calED$ for short) if the following holds
\begin{equation}\label{eq:p-inv_cond1}
    \forall e_1,e_2 \in \calED: \tau^S_{e_1} \equiv \tau^S_{e_2}.
\end{equation}
For any $\calED \subseteq \calE$, we denote by $\mS^{\einv}_{\calED}$ the 
collection of all e-invariant sets \wrt $\calED$. 
\end{definition}
The above definition does not depend on the choice of $t_0$ in Setting~\ref{setting:s1}: if  condition \eqref{eq:p-inv_cond1} holds for one $t_0$, it holds for all $t_0 \in \calT$.
In this work, we focus on discrete treatments but, in principle, one could consider the continuous case by defining the CATE function as $(x, t) \mapsto \pdv{}{t}\EX^{e,\pi^t}[Y \mid X^s =x]$ and define the effect-invariance analogously to \eqref{eq:p-inv_cond1}.

We now provide a characterization for e-invariance 
based on the outcome mechanism. 
\begin{proposition}\label{prop:p-inv_eqv}
Assume Setting~\ref{setting:s1}. A subset $S 
\subseteq \{1,\dots,d\}$ is e-invariant \wrt $\calED$ if and only if there exists a pair of 
functions $\psi_S: \calX^S \times \calT \rightarrow \R$ and $\nu_S: \calX^S \times \calE \rightarrow \R$ such that
\begin{equation}\label{eq:p-inv_cond2}
\forall e \in \calED, \forall x \in \calX^S, \forall t \in \calT: \EX^{e, \pi_t}[Y | X^S = x] = \psi_S(x, t) + \nu_S(x, e),
\end{equation}
and $\psi_S(\cdot ,t_0) \equiv 0$.
In particular, 
we have for all $e \in \calED$ that $\psi_S \equiv \tau^S_e$.
\end{proposition}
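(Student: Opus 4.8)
The statement is a characterization of e-invariance as an additive separability (between the treatment $t$ and the environment $e$) of the conditional mean $\EX^{e,\pi_t}[Y\mid X^S=x]$. The plan is to prove both implications by a direct algebraic manipulation, after introducing the shorthand $g_e(x,t) := \EX^{e,\pi_t}[Y\mid X^S=x]$, so that by \eqref{eq:cate_binary} we have $\tau^S_e(x,t) = g_e(x,t) - g_e(x,t_0)$ for all $x \in \calX^S$, $t \in \calT$, and $e \in \calED$. With this notation, the task reduces to relating the invariance across $e$ of the difference $g_e(\cdot,\cdot) - g_e(\cdot,t_0)$ to the existence of an additive decomposition $g_e(x,t) = \psi_S(x,t) + \nu_S(x,e)$.

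For the forward direction, I would assume $S$ is \einvt{} \wrt $\calED$, so that by \eqref{eq:p-inv_cond1} the function $\tau^S_e$ does not depend on the choice of $e\in\calED$. This lets me define $\psi_S := \tau^S_e$ for an (arbitrary, hence any) fixed $e\in\calED$; well-definedness of $\psi_S$ is exactly the content of e-invariance. I would then set $\nu_S(x,e) := g_e(x,t_0) = \EX^{e,\pi_{t_0}}[Y\mid X^S=x]$. Substituting back gives $\psi_S(x,t)+\nu_S(x,e) = \tau^S_e(x,t) + g_e(x,t_0) = g_e(x,t)$, which is \eqref{eq:p-inv_cond2}, while $\psi_S(x,t_0) = \tau^S_e(x,t_0) = g_e(x,t_0)-g_e(x,t_0) = 0$ establishes the normalization $\psi_S(\cdot,t_0)\equiv 0$.

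For the converse, I would assume a decomposition as in \eqref{eq:p-inv_cond2} with $\psi_S(\cdot,t_0)\equiv 0$ and simply compute the CATE: for every $e\in\calED$,
\[
\tau^S_e(x,t) = g_e(x,t) - g_e(x,t_0) = \bigl(\psi_S(x,t)+\nu_S(x,e)\bigr) - \bigl(\psi_S(x,t_0)+\nu_S(x,e)\bigr) = \psi_S(x,t),
\]
where the environment terms $\nu_S(x,e)$ cancel and $\psi_S(x,t_0)=0$ is used. Since the right-hand side is independent of $e$, condition \eqref{eq:p-inv_cond1} follows, so $S$ is \einvt{}; this computation simultaneously yields the final claim $\psi_S\equiv\tau^S_e$ for all $e\in\calED$.

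I do not expect a genuine obstacle here: the proposition is essentially a restatement, and its core is the one-line cancellation above. The only points requiring care are bookkeeping ones: verifying that $\psi_S$ in the forward direction is well defined (which is precisely where e-invariance is used, and where one implicitly needs $\calED\neq\emptyset$ to fix a reference $e$), and noting that the normalization $\psi_S(\cdot,t_0)\equiv 0$ is what removes the residual freedom to shift an arbitrary function of $x$ between $\psi_S$ and $\nu_S$, thereby pinning down the identification $\psi_S\equiv\tau^S_e$.
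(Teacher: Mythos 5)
Your proof is correct and takes essentially the same route as the paper's: the paper's Lemma~\ref{lemma:decompose} is exactly your identity $g_e(x,t)=\tau^S_e(x,t)+g_e(x,t_0)$ (with $\nu_S(x,e)=\EX^{e,\pi_{t_0}}[Y\mid X^S=x]$ as the baseline term), the forward direction likewise fixes a reference environment to define $\psi_S$, and the converse is the same one-line cancellation of $\nu_S(x,e)$. Your observations about well-definedness requiring a reference environment in $\calED$ and the normalization $\psi_S(\cdot,t_0)\equiv 0$ pinning down the identification are exactly the points the paper's argument relies on (the paper even has a small slip, fixing $e_0\in\calE$ rather than $e_0\in\calED$, which your version avoids).
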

\begin{proof}
  See Appendix~\ref{proof:prop:p-inv_eqv}.
\end{proof}
The two equivalent conditions \eqref{eq:p-inv_cond1} and \eqref{eq:p-inv_cond2} provide two different viewpoints on e-invariant sets. The former shows that, when conditioning on an e-invariant set $S$, the CATE functions are invariant across environments, while the latter ensures that part of the conditional expected outcome $\EX^{e, \pi_t}[Y \mid X^{S}]$ remains invariant across environments. In particular, the conditional expected outcome $\EX^{e, \pi_t}[Y \mid X^{S}]$ can be additively decomposed into a fixed effect-modification term ($\psi_{S}$) that depends on the treatment and an environment-varying main-effect term $(\nu_{S})$ that does not depend on the treatment. Here, the additivity stems from the definition of the CATE; 
different causal contrasts correspond to other forms of decomposition.

Most of the results in the remaining sections of our work rely on the existence of an e-invariant set. We therefore make this assumption explicit. 
\begin{assumption}\label{assm:exist_inv_set}
In Setting~\ref{setting:s1}, there exists a subset $S \subseteq \{1,\dots, d\}$ such that $S$ is e-invariant \wrt $\calE$. 
\end{assumption}
The subsequent section connects Assumption~\ref{assm:exist_inv_set} to
a class of structural causal models \citep{Pearl2009, Bongers2021, dawid2021decision, saengkyongam2021invariant}. 
For such models, proposition~\ref{prop:suffforS} below shows
that Assumption~\ref{assm:exist_inv_set} is satisfied if the outcome mechanism is of a specific form and an independence assumption holds.
Furthermore, using a test for e-invariance, see Section~\ref{sec:infer_inv_set}, Assumption~\ref{assm:exist_inv_set} is testable from data for the observed environments $\calEtr$.
\subsection{Effect-invariance in structural causal models}

Assumption~\ref{assm:exist_inv_set} is satisfied in a restricted class of 
structural causal models (SCMs).
Formally, we consider the following class of SCMs 
inducing the sequential sampling steps (2)--(4) in Setting~\ref{setting:s1}.
\begin{equation}\label{eq:setting2-scm}
\mathcal{S}(e, \pi):\quad
\begin{cases}
U\coloneqq s_e(X, U, \epsilon_U) \\
X\coloneqq h_e(X, U, \epsilon_X)\\
T\coloneqq \ell_{\pi}(X, \epsilon_T) \\
Y\coloneqq f(X^{\PA_{f,X}}, U^{\PA_{f,U}}, T) + g_e(X, U, \epsilon_Y),
\end{cases}
\end{equation}
where $(U, X, T, Y)\in\mathcal{X}\times\calU\times\calT\times\mathbb{R}$, $(\epsilon_U, 
\epsilon_X,\epsilon_T,\epsilon_Y)$ are jointly independent noise variables, $(s_e, h_e, g_e)_{e \in
\calE}$, $f$ and $\ell_\pi$ are measurable functions such that, for all $x\in\calX$, $\ell_{\pi}(x,
\epsilon_T)$ is a random variable on $\calT$ with distribution $\pi(x)$, and $\PA_{f,X} \subseteq \{1,\dots,d\}$ and $\PA_{f,U} \subseteq \{1,\dots,p\}$. We call $\PA_{f,X}$ and $\PA_{f,U}$ the observed and unobserved policy-relevant parents, respectively.

To determine whether e-invariance holds, it is
helpful to distinguish between the parents of $Y$ that enter $f$ (these are relevant to determine optimal policies) and those parents of $Y$ that enter into $g_e$. For building intuition, we therefore define 
a
graphical representation, which splits $Y$ into two nodes (visually, the graphical representation is similar to SWIGs \citep{richardson2013single}, and we use `tikz-swigs' LaTeX package for drawing the graph; the interpretation, however, is different). 
\begin{definition}[E-invariance graph]
We represent a class of SCMs of the form~\eqref{eq:setting2-scm} by an \emph{e-invariance graph}. 
This graph 
contains,  
as usually done when representing SCMs graphically,
a directed edge from variables on the right-hand side of assignments to variables on the left-hand side, 
but
with the exception that $e$ is represented by a square node and the node $Y$ is split into a part for $Y_f$ and a part for $Y_g$; see Example~\ref{ex:example1} and also Figure~\ref{fig:ex-intro}. 
\end{definition}
\begin{example}\label{ex:example1} Consider the following SCMs 
\\
\begin{minipage}{0.49\textwidth}
\begin{equation*}
\mathcal{S}(e, \pi):\quad
\begin{cases}
        & U^1 \coloneqq \epsilon_{U^1}, \quad U^2 \coloneqq \epsilon_{U^2} \\
        &X^3 \coloneqq \gamma^3_{e} U^1 + \epsilon_{X^3} \\
        &X^2 \coloneqq \gamma^2_{e} U^2 + \epsilon_{X^2} \\
        &X^1 \coloneqq X^2 + \gamma^1_{e} U^1 + \epsilon_{X^1} \\
        &T \coloneqq \ell_{\pi}(X^1, X^2, X^3, \epsilon_T) \\
        &Y \coloneqq \underbrace{T (1 + 0.5X^2 + 0.5 U^1)}_{f} + \\ &\underbrace{\mu_e + U^1 + U^2 + X^2 + X^3 + \epsilon_Y}_{g_e},
\end{cases}
\end{equation*}
\hspace{0.1pt}
\end{minipage}
\begin{minipage}{0.49\textwidth}
\qquad
\begin{tikzpicture}[node distance=1.5cm,
    thick, roundnode/.style={circle, draw, inner sep=1pt,minimum size=7mm}, squarenode/.style={rectangle, draw, inner sep=1pt, minimum size=7mm}]
    \node[roundnode] (X1) at (-1.5, 1.5) {$X^1$};
    \node[roundnode] (X2) at (-1.5, 0){$X^2$};
    \node[roundnode] (X3) at (0, 1.5) {$X^3$};
    \node[roundnode][fill=black!25] (U) at (1.5,2) {$U^1$};
    \node[roundnode][fill=black!25] (U2) at (1.5,-1) {$U^2$};
    \node[squarenode] (E) at (-2.75, 0){$e$};
    \node[roundnode] (A) at (0, 0) {$T$};
    \node[name=R,shape=swig vsplit,swig hsplit={gap=7pt}, right=.7 of A]{
    \nodepart{left}{$Y_{f}$}
    \nodepart{right}{$Y_{g}$}};
    \draw[shift=(R.center)] plot[mark=+] coordinates{(0,0)};
    \draw[-latex] (E) edge (X2);
    \draw[-latex] (X2) -- (X1);
    \draw[-latex] (E) -- (X1);
    \draw[-latex] (E) -- (X3);
    \draw[-latex] (X2) edge[bend right=30] (R);
    \draw[-latex] (E) edge[bend right=70] (R.330);
    \draw[-latex] (U) -- (X3);
    \draw[-latex] (X3) edge[bend left=30] (R.40);
    \draw[-latex] (X1) -- (A);
    \draw[-latex] (X2) -- (A);
    \draw[-latex] (X3) -- (A);
    \draw[-latex, thick] (U) edge[bend right=30] (X1);
    \draw[-latex, thick] (A) -- (R);
    \draw[-latex, thick] (U) -- (R.140);
    \draw[-latex, thick] (U2) edge[bend right=20] (R.320);
    \draw[-latex, thick] (U2) edge[bend left=30] (X2);
\end{tikzpicture}
\end{minipage}
\newline
where $\calT = \{0,1\}$, $\epsilon_{U^1}, \epsilon_{U^2}, \epsilon_{X^1}, \epsilon_{X^2}, \epsilon_{X^3}, \epsilon_T, \epsilon_Y$ are 
jointly independent noise variables with mean zero, and $\gamma^1_e, \gamma^2_e, \gamma^3_e, \mu_e$ 
are environment-specific parameters.  
Here, 
$\PA_{f,X} = \{2\}$ and $\PA_{f,U} = \{1\}$ are the policy-relevant parents; the e-invariance graph is shown on the right.  
While in this example, the environment changes the coefficients $\gamma^1_e, \gamma^2_e, \gamma^3_e$ and $\mu_e$, the generality of \eqref{eq:setting2-scm} allows for a change in the noise distributions, too.
\end{example}

Under the class of SCMs \eqref{eq:setting2-scm}, 
the following proposition shows that an e-invariant set exists if the unobserved $U^{\PA_{f,U}}$ and the observed policy-relevant parents 
$X^{\PA_{f,X}}$ are independent, and the environments do not influence $U^{\PA_{f,U}}$. 
\begin{proposition}\label{prop:suffforS}
Assume Setting~\ref{setting:s1} and that the sequential sampling steps (2)--(4) are induced by the 
SCMs in \eqref{eq:setting2-scm}. If (i) for all $e \in \calED$, $U^{\PA_{f,U}} \ci X^{\PA_{f,X}}$ in $\P^{e}_{X,U}$ and (ii) $\P^e_{U^{\PA_{f,U}}}$ are identical across $e \in \calED$, 
we have that 
\begin{equation}
    \PA_{f,X} \text{ is e-invariant \wrt } \calED.
\end{equation}
\end{proposition}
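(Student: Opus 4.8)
The plan is to compute the CATE $\tau^S_e$ for the candidate set $S \coloneqq \PA_{f,X}$ directly from the SCM~\eqref{eq:setting2-scm} and to show that the result does not depend on $e \in \calED$, so that condition~\eqref{eq:p-inv_cond1} of Definition~\ref{def:par_inv_set} holds. Write $A \coloneqq \PA_{f,U}$ for brevity, and fix $t \in \calT$ and $x \in \calX^S$. Under the constant policy $\pi_t$, the assignment $T \coloneqq \ell_\pi(X, \epsilon_T)$ forces $T = t$ almost surely, so along this policy $Y = f(X^S, U^A, t) + g_e(X, U, \epsilon_Y)$. Crucially, by Setting~\ref{setting:s1} the pair $(X, U)$ is drawn from $\P^e_{X,U}$ in step (2) \emph{before} the treatment is selected in step (3), and $\epsilon_Y$ is an independent noise variable; hence the joint law of $(X, U, \epsilon_Y)$, and therefore the conditional law of $(X, U, \epsilon_Y)$ given $X^S = x$, is the same under any policy and in particular under both $\pi_t$ and $\pi_{t_0}$. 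I would use this to split the conditional expectation by linearity as
\begin{equation*}
\EX^{e, \pi_t}[Y \mid X^S = x] = \EX^{e}[f(x, U^A, t) \mid X^S = x] + \EX^{e}[g_e(X, U, \epsilon_Y) \mid X^S = x],
\end{equation*}
where the substitution $X^S = x$ inside $f$ is valid because we condition on $X^S = x$ (the full-support assumption on $\P^e_X$ guarantees this conditioning is well-defined for every $x \in \calX^S$).

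The second, treatment-free term depends on $e$ but not on $t$: since $g_e$ does not take $T$ as an argument and the conditional law of $(X, U, \epsilon_Y)$ given $X^S = x$ is policy-independent, this term is identical under $\pi_t$ and $\pi_{t_0}$ and therefore cancels in the contrast~\eqref{eq:cate_binary}, leaving
\begin{equation*}
\tau^S_e(x, t) = \EX^{e}[f(x, U^A, t) \mid X^S = x] - \EX^{e}[f(x, U^A, t_0) \mid X^S = x].
\end{equation*}
Now I would invoke assumption (i): because $U^A \ci X^S$ under $\P^e_{X,U}$, conditioning on $X^S = x$ leaves the law of $U^A$ unchanged, so each term equals an unconditional integral $\int f(x, u, t)\, d\P^e_{U^A}(u)$. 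Finally, assumption (ii) states that $\P^e_{U^A}$ is the same measure for every $e \in \calED$, which removes the last dependence on $e$; hence $\tau^S_e(x, t)$ is a function of $(x, t)$ alone and~\eqref{eq:p-inv_cond1} holds. Equivalently, setting $\psi_S(x,t) \coloneqq \int f(x,u,t)\,d\P_{U^A}(u) - \int f(x,u,t_0)\,d\P_{U^A}(u)$ and letting $\nu_S$ absorb the main-effect and baseline terms yields the decomposition~\eqref{eq:p-inv_cond2} of Proposition~\ref{prop:p-inv_eqv}.

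The conceptual content is light: the $g_e$ contribution is a nuisance main effect that the CATE contrast eliminates, while (i) and (ii) together make the $f$ contribution factorize into an $e$-free integral. The step I expect to require the most care is the measure-theoretic justification of the two interchanges: first, that the constant-policy conditional expectation splits as above and that the $g_e$ term genuinely does not depend on the policy (this rests on the sampling order in Setting~\ref{setting:s1}, namely that $(X,U)$ precedes $T$ and that $\epsilon_Y$ is exogenous); and second, that $U^A \ci X^S$ permits replacing the conditional law of $U^A$ given $X^S = x$ by its marginal inside the integral defining $f$. Making these rigorous amounts to an application of the tower property together with the substitution rule for conditional expectations, using the full-support assumption on $\P^e_X$ to ensure the conditioning event is non-degenerate for all $x$.
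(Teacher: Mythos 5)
Your proposal is correct and follows essentially the same route as the paper's proof: decompose $\EX^{e,\pi_t}[Y\mid X^{\PA_{f,X}}=x]$ via the SCM into the $f$-term and the $g_e$-term, use assumption (i) to replace the conditional law of $U^{\PA_{f,U}}$ given $X^{\PA_{f,X}}=x$ by its marginal, and use assumption (ii) to drop the $e$-dependence of the resulting integral. The only cosmetic difference is that the paper concludes by invoking the decomposition characterization of Proposition~\ref{prop:p-inv_eqv}, whereas you primarily cancel the $g_e$ term directly in the CATE contrast \eqref{eq:p-inv_cond1} (and you note the Proposition~\ref{prop:p-inv_eqv} route as well); your writeup is, if anything, slightly more careful about why the $g_e$ term is policy-independent.
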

\begin{proof}
See Appendix~\ref{proof:prop:suffforS}. 
\end{proof}
\begin{excont}[Example~\ref{ex:example1} continued]\label{test}
Let $e \in \calE$. In this example, it holds that $U^{\PA_{f,U}} \ci X^{\PA_{f,X}}$ in $\P^{e}_{X,U}$. Therefore, $\PA_{f,X} = \{2\}$ satisfies the e-invariance condition \eqref{eq:p-inv_cond1} by Proposition~\ref{prop:suffforS}. To illustrate this, consider the expected outcome conditioned on $X^2$,
\begin{align*}
    \EX^{e, \pi_t}[Y \mid X^2] &= \EX^{e, \pi_t}[T (1 + 0.5X^2 + 0.5 U^1) + \mu_e + U^1 + U^2 + X^2 + X^3  \mid X^2] \\
    &= \ind(t = t_0)(1 + 0.5 X^2 + 0.5\EX^{e}[U^1 \mid X^2]) + \mu_e + X^2 + \EX^{e}[U^1 + U^2 + X^3 \mid X^2] \\
    &= \underbrace{\ind(t = t_0)(1 + 0.5 X^2)}_{\psi_{\{2\}}(X^2, t)} + \underbrace{\mu_e + X^2 + \EX^{e}[U^2 + X^3 \mid X^2]}_{\nu_{\{2\}}(X^2, e)} \qquad \text{since $U^1 \ci X^2$}.
\end{align*}
Thus, by Proposition~\ref{prop:p-inv_eqv}, $\{2\}$ is e-invariant \wrt $\calE$.
\end{excont}

\section{Zero-shot policy generalization through e-invariance}\label{sec:zero-shot}

In this section, we consider zero-shot generalization (sometimes called 
unsupervised domain adaptation). We aim to find a policy that performs well (in terms of the 
expected outcome or reward) in 
a new test
environment
in which we have access to observations of the covariates but not the outcome.
We formally lay out the setup and objective of zero-shot policy 
generalization and show that a policy that optimally uses information from e-invariant sets achieve
desirable generalization properties.

\begin{namedsetting}{Zero-shot}\label{setting:zero-shot}
Assume Setting~\ref{setting:s1} and 
that
we are given $n\in\mathbb{N}$ training observations $D^{\tr} \coloneqq (Y^{\tr}_i,
X^{\tr}_i, T^{\tr}_i, \pi^{\tr}_i, e^{\tr}_i)_{i=1}^n$ 
from the observed environments $e^{\tr}_i \in \calEtr$.
During test time, we are 
given $m \in\mathbb{N}$ observations  
$D^{\tst}_X \coloneqq (X^{\tst}_i)_{i=1}^m$ from a 
single test environment $e^{\tst} \in \calE$.
We denote by 
$Q^{\tr} \coloneqq 
Q^{\tr}_1 \otimes \ldots \otimes Q^{\tr}_n$, where
$Q^{\tr}_i \coloneqq \P^{e_i,\pi_i}_{X, Y, T}$ 
and $Q^{\tst}_X \coloneqq \P^{e^{\tst}}_X$ 
the distributions of $D^{\tr}$ and $D^{\tst}_X$, respectively. 
\end{namedsetting}

We seek to find a policy that 
generalizes well to the test environment $e^{\tst}$.
As we only have access to the observed covariate distribution $\P^{e^{\tst}}_{X}$ and since there may be multiple potential test environments $e\in\mathcal{E}$ with $\P^{e}_X=\P^{e^{\tst}}_{X}$, we propose to evaluate the performance of a policy $\pi$ based on its expected outcome (relative to a fixed  baseline policy $\pi_{t_0}$ that always chooses $t_0$) in the worst-case scenario across all environments with covariate distribution equal to $\P^{e^{\tst}}_{X}$.
Formally, let $[e^{\tst}] \coloneqq \{e \in \calE \mid \P^e_X = Q^{\tst}_X\}$
be an equivalence class of
environments under which the covariate distribution $\P^e_X$ is the same as $Q^{\tst}_X$.
We then consider the following worst-case objective
\begin{equation}\label{eq:worst_case_obj}
V^{[e^{\tst}]}(\pi) \coloneqq \inf_{e \in [e^{\tst}]} \big( \EX^{e, \pi}[Y] - \EX^{e, \pi_{t_0}}[Y] \big).
\end{equation}
The goal of (population) zero-shot 
generalization 
applied to our setting
is then to find a policy that (i) is identifiable from $Q^{\tr}_i$ (for an arbitrary $1\leq i \leq n$) and $Q^{\tst}_X$ and 
(ii) maximizes the worst-case performance defined in \eqref{eq:worst_case_obj}.

We now introduce a policy $\pi^{\einv}$ that optimally uses information from e-invariant sets and show that 
$\pi^{\einv}$ achieves the aforementioned goal under suitable assumptions.  
To this end, for all $S \in 
\mS^{\einv}_{\calEtr}$ (see Definition~\ref{def:par_inv_set}),
we denote the set of all policies that depend only on $X^{S}$ by $\Pi^S \coloneqq \{ \pi \in \Pi \mid \exists \bar{\pi}: \calX^S \rightarrow \Delta(\calT) \text{ s.t. } \forall x \in \calX\,, \pi(\cdot | x) = \bar{\pi}(\cdot | x^S) \} \subseteq \Pi$. Next, for all $S \in 
\mS^{\einv}_{\calEtr}$,
we define $\Pi^S_{\opt} \subseteq \Pi^S$ to be a set of policies 
such that each $\pi^S \in 
\Pi^S_{\opt}$ satisfies for all $x\in\calX$ and $t \in\calT$ that
\begin{equation}\label{eq:pi_S}
    \pi^S(t | x) > 0 \implies 
    t  \in \argmax_{t^\prime \in \calT} \tfrac{1}{\abs{\calEtr}} \sum_{e \in \calEtr} \tau^S_e(x^S, t^\prime).
\end{equation}
That is, all the mass of
$\pi^S(\cdot|x)$
is distributed on treatments that maximize the treatment effect conditioned on 
$X^S$. Since $\mS^{\einv}_{\calEtr}$ contains only e-invariant sets \wrt $\calEtr$, we also have that 
$ \tfrac{1}{\abs{\calEtr}} \sum_{e \in \calEtr} \tau^S_e \equiv \tau^S_f$ 
for any fixed $f \in \calEtr$ (but for finite samples, we approximate the former).
Finally, we denote by 
\begin{equation*}
\Pi^{\einv}_{\opt} \coloneqq \{\pi \in \Pi \mid \exists S \in \mS^{\einv}_{\calEtr}\ 
\text{s.t.\ } \pi \in \Pi^S_{\opt}\}
\end{equation*}
the collection of all such policies.

We now propose to use a policy from the collection of policies that are optimal among $\Pi^{\einv}_{\opt}$, i.e.,
\begin{equation}\label{eq:opt_inv_policy}
     \argmax_{\pi\in \Pi^{\einv}_{\opt}} \EX^{e^{\tst}, \pi}[Y].
\end{equation}
Although the set~\eqref{eq:opt_inv_policy} depends on the expected value of $Y$ in the test environment, in Proposition~\ref{prop:gen_policy_ident} we show that we can construct a policy, denoted by $\pi^{\einv}$, that 
satisfies 
the argmax property
\eqref{eq:opt_inv_policy} 
and is identifiable from the data available during training (i.e., i.i.d.\ observations from $Q^{\tr}$ and $Q^{\tst}_X$).

In Theorem~\ref{thm:inv_policy}, we then prove generalization properties of an optimal e-invariant policy $\pi^{\einv}$. This generalization result requires the following two assumptions.
\begin{assumption}[Generalizing environments]
  \label{assm:2}
  It holds for all $S \subseteq \{1,\dots,d\}$ that
  \begin{equation}
    \text{$S$ is e-invariant
      \wrt $\calEtr$}
    \implies \text{$S$ is e-invariant \wrt $\calEtr \cup [e^{\tst}]$}.
  \end{equation}
\end{assumption}
    
Assumption~\ref{assm:2} imposes some commonalities between environments which allows a transfer of e-invariance from 
the observed to the test environments. Similar assumptions 
are used when proving guarantees of
other invariance-based learning methods 
(e.g., \cite{rojas2018invariant, Magliacane2018, Christiansen2020DG, pfister2021SR, saengkyongam2021invariant}).
\begin{assumption}[Adversarial environment]
  \label{assm:3}
  There exist $e\in[e^{\tst}]$ and $S \in \mS^{\einv}_{\calE}$ such that for all $x \in \calX$ it holds that
  \begin{equation} \max_{t \in \calT} \tau_e(x, t) = \max_{t \in \calT} \tau_e^S(x^S, t).
  \end{equation}
\end{assumption}
 Assumption~\ref{assm:3} ensures that there exists at least one environment
 that does not benefit from non-e-invariant covariates and facilitates the worst-case optimality 
 result of our proposed optimal e-invariant policy $\pi^{\einv}$. 
 Without Assumption~\ref{assm:3}, relying only on e-invariant covariates can become suboptimal if other (non-e-invariant) covariates are beneficial across all environments. For example, consider Example~\ref{ex:example1} and assume that the coefficients $\gamma^1_e$ and $\gamma^3_e$ in different environments are relatively close, e.g., $\forall e \in \calE: \gamma^1_e, \gamma^3_e \in (0.9, 1)$. In this scenario, 
 $\{X^1, X^3\}$ is not e-invariant. Still, it is preferable to use these variables for policy learning as they
 provide valuable information for predicting $U^1$, which modifies the treatment effect.
In the above setting, Assumption~\ref{assm:3} does not hold; it would be satisfied if there is at least one additional environment $e \in [e^{\tst}]$ where $\gamma^1_e=\gamma^3_e=0$. The reason is that in such an environment the variables $X^1$ and $X^3$ do not offer any relevant information for predicting $U^1$. A similar assumption, known as confounding-removing interventions, is introduced in \citep{Christiansen2020DG} in the prediction setting.

\begin{proposition}[Identifiability]\label{prop:gen_policy_ident}
Assume Setting~\ref{setting:zero-shot} and Assumptions~\ref{assm:exist_inv_set}~and~\ref{assm:2}. Let $e \in \calEtr$ be an arbitrary training environment, for all $S \in 
\mS^{\einv}_{\calEtr}$ let $\pi^S \in \Pi^S_{\opt}$, that is, a policy that satisfies \eqref{eq:pi_S}, and let $S^*$ be a subset such that 
\begin{equation}\label{eq:opt_inv_policy_explicit}
    S^* \in A \coloneqq \argmax_{S \in \mS^{\einv}_{\calEtr}}\EX^{e^{\tst}} \left[ \textstyle\sum_{t\in\calT}\tau^S_{e}(X^S,t) \pi^S(t \mid X) \right].
\end{equation}
Define $\pi^{\einv} \coloneqq \pi^{S^*}$. Then, the following holds: (i) the set $A$ is identifiable from the distributions $Q^{\tr}_i$
(for an arbitrary $1\leq i \leq n$) and $Q^{\tst}_X$ (which makes it possible to choose $S^*$ and $\pi^{S^*}$ during test time) and (ii) $\pi^{\einv}$ is an element in \eqref{eq:opt_inv_policy}.
\end{proposition}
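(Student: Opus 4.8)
The plan is to reduce both claims to a single \emph{reward-to-CATE identity} and then transport it to the test environment using e-invariance. First I would show that for any $S\subseteq\{1,\dots,d\}$, any environment $e\in\calE$, and any policy $\pi\in\Pi^S$,
\[
\EX^{e,\pi}[Y]-\EX^{e,\pi_{t_0}}[Y]=\EX^{e}\Big[\textstyle\sum_{t\in\calT}\pi(t\mid X^S)\,\tau^S_e(X^S,t)\Big].
\]
This follows from the sequential sampling of Setting~\ref{setting:s1}: since $\pi$ chooses $T$ from $X^S$ only and the conditional law of $(X,U)$ given $X^S$ does not depend on the policy, $\EX^{e,\pi}[Y\mid X^S]=\sum_t\pi(t\mid X^S)\,\EX^{e,\pi_t}[Y\mid X^S]$; subtracting the corresponding expression for $\pi_{t_0}$ turns the inner terms into $\tau^S_e$ by definition~\eqref{eq:cate_binary}. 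Assumption~\ref{assm:exist_inv_set} ensures $\mS^{\einv}_{\calEtr}\neq\emptyset$ (e-invariance \wrt $\calE$ implies e-invariance \wrt $\calEtr$), so $A$ is a nonempty argmax over a finite family and $S^*,\pi^{\einv}$ are well defined.

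For (ii), fix the training environment $e$; since $e^{\tst}\in[e^{\tst}]$, Assumption~\ref{assm:2} yields $\tau^S_{e^{\tst}}\equiv\tau^S_e$ for every $S\in\mS^{\einv}_{\calEtr}$. Evaluating the identity in the test environment, any $\pi\in\Pi^S_{\opt}$ satisfies
\[
\EX^{e^{\tst},\pi}[Y]-\EX^{e^{\tst},\pi_{t_0}}[Y]=\EX^{e^{\tst}}\Big[\textstyle\sum_{t}\pi(t\mid X^S)\tau^S_e(X^S,t)\Big]=\EX^{e^{\tst}}\big[\max_{t\in\calT}\tau^S_e(X^S,t)\big]\eqqcolon R(S),
\]
because \eqref{eq:pi_S} (with $\tfrac{1}{|\calEtr|}\sum_{e'}\tau^S_{e'}\equiv\tau^S_e$ by e-invariance) forces $\pi$ to place all of its mass on maximizers of $\tau^S_e(X^S,\cdot)$. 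Thus $R(S)$ depends on $S$ alone, the objective in \eqref{eq:opt_inv_policy_explicit} equals $R(S)$, and $A=\argmax_{S\in\mS^{\einv}_{\calEtr}}R(S)$. Any $\pi'\in\Pi^{\einv}_{\opt}$ lies in some $\Pi^{S'}_{\opt}$, so its test contrast is $R(S')\le R(S^*)$, which equals the contrast of $\pi^{\einv}=\pi^{S^*}$; since $\EX^{e^{\tst},\pi_{t_0}}[Y]$ is constant in $\pi$, this shows $\pi^{\einv}$ maximizes $\EX^{e^{\tst},\pi}[Y]$ over $\Pi^{\einv}_{\opt}$, i.e.\ it belongs to \eqref{eq:opt_inv_policy}.

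For (i) I would recover each ingredient of $R(S)$ from $Q^{\tr}_i$ and $Q^{\tst}_X$. The outer expectation is taken under $\P^{e^{\tst}}_X=Q^{\tst}_X$, so only the function $\tau^S_e$ must be identified. The difficulty is that under $\pi_i$ the treatment uses the full covariate $X$, so the naive regression $\EX^{Q^{\tr}_i}[Y\mid X^S,T]$ is \emph{not} $\EX^{e_i,\pi_t}[Y\mid X^S]$. Instead I would argue in two steps: since the policy ignores $U$ we have $T\ci U\mid X$, and with $\pi_i$ bounded this gives $\EX^{e_i,\pi_t}[Y\mid X=x]=\EX^{Q^{\tr}_i}[Y\mid X=x,T=t]$, identified on all of $\calX$ by full support of $\P^{e_i}_X$; then marginalizing this function over the (policy-free, hence identifiable) conditional law $\P^{e_i}_{X\mid X^S=x^S}$ yields $\EX^{e_i,\pi_t}[Y\mid X^S=x^S]$ and hence $\tau^S_{e_i}$. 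Because $S$ is e-invariant \wrt $\calEtr$, $\tau^S_{e_i}\equiv\tau^S_e$ for the fixed $e$, which both legitimizes using an arbitrary index $i$ and makes the policies $\pi^S$ of \eqref{eq:pi_S} constructible from the same $Q^{\tr}_i$. Combining these, the objective in \eqref{eq:opt_inv_policy_explicit}, and therefore $A$ and the choice $S^*$, is identifiable.

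The main obstacle is precisely this identification step: recovering the $X^S$-conditional CATE from a single training distribution whose policy depends on all of $X$. The resolution is to reconstruct the full-$X$ response first and then re-marginalize, rather than conditioning on $X^S$ directly, and to use e-invariance to collapse the environment index so that one $Q^{\tr}_i$ suffices. For the history-dependent policies of Remark~\ref{rem:1}, I would repeat the two steps with $T_i\ci U_i\mid(X_i,H_i)$ in place of $T\ci U\mid X$, replacing $\pi_i$ by $\pi_i(\cdot\mid\cdot,h_i)$ as indicated there.
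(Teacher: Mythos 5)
Your proof is correct, and its overall skeleton matches the paper's: the reward-to-CATE identity for policies in $\Pi^S$, transport of $\tau^S$ to the test environment via Assumption~\ref{assm:2}, and an argmax comparison across the sets in $\mS^{\einv}_{\calEtr}$. Part (ii) is essentially the paper's argument, only organized more cleanly through the quantity $R(S)$; as a side benefit your version shows directly that $\pi^{S^*}$ attains the maximum, whereas the paper starts from an arbitrary maximizer $\pi^{\diamond}_{e^{\tst}}$ and thus implicitly presupposes that the argmax in \eqref{eq:opt_inv_policy} is nonempty. Where you genuinely diverge is the identification step in part (i). The paper uses inverse probability weighting: by boundedness of $\pi_i$ it writes $\EX^{e_i,\pi_t}[Y \mid X^S = x] = \EX^{e_i,\pi_i}\bigl[\ind(T=t)\,Y/\pi_i(t\mid X) \,\big|\, X^S = x\bigr]$, so the target is directly an expectation under $Q^{\tr}_i$ with the known training policy entering the weights. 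You instead use regression adjustment (a g-formula argument): $T \ci U \mid X$ under the training policy gives $\EX^{e_i,\pi_t}[Y\mid X=x] = \EX^{Q^{\tr}_i}[Y\mid X=x, T=t]$, and re-marginalizing this function over the policy-free conditional law $\P^{e_i}_{X\mid X^S}$ recovers $\EX^{e_i,\pi_t}[Y\mid X^S]$ and hence $\tau^S_{e_i}$. Both routes rely on the same ingredients (positivity of $\pi_i$, full support of $\P^{e_i}_X$, and the tower-property step exploiting that the law of $X$ given $X^S$ does not depend on the policy), and both then collapse the environment index by e-invariance. What each buys: the paper's IPW identity is a one-line formula whose empirical counterpart is a weighting estimator; your route identifies $\tau^S_{e_i}$ from the distribution $\P^{e_i,\pi_i}_{X,Y,T}$ alone without using the values of $\pi_i$ as side information, and it makes explicit a subtlety the weighting identity sidesteps silently, namely why the naive regression of $Y$ on $(X^S,T)$ is not the right object when the training policy depends on all of $X$.
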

\begin{proof}
See Appendix~\ref{proof:prop:gen_policy_ident}.  
\end{proof}
\begin{theorem}[Generalizability]
  \label{thm:inv_policy}
  Assume Setting~\ref{setting:zero-shot}~and~Assumptions~\ref{assm:exist_inv_set} and~\ref{assm:2}. 
  Let $\pi^{\einv}$ be as defined in Proposition~\ref{prop:gen_policy_ident}.
Then, the two following
  statements hold.
  \begin{enumerate}[label=(\roman*),
    ref={\thetheorem(\roman*)}]
  \item \label{thm:inv_policy_1}
    Let $\pi_t$, as defined in Section~\ref{sec:mepl}, be the policy that always chooses treatment $t \in \calT$.
    We have that
    \begin{equation}
      \max_{t \in \calT}\EX^{e^{\tst}, \pi_t}[Y]
      \leq 
      \EX^{e^{\tst},\pi^{\einv}}[Y]. \label{eq:lower_bound_invariant_policy_NEW}
    \end{equation}
  \item \label{thm:inv_policy_2} Given Assumption~\ref{assm:3}, we have that
    \begin{equation}
      \forall \pi \in \Pi: V^{[e^{\tst}]}(\pi^{\einv}) \geq V^{[e^{\tst}]}(\pi).
     \end{equation}
  \end{enumerate}
\end{theorem}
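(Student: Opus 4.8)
The plan is to reduce both parts to a single reward identity and then exploit the e-invariance transfer (Assumption~\ref{assm:2}) and the adversarial structure (Assumption~\ref{assm:3}). First I would record the decomposition that underlies everything. Using the sequential sampling of Setting~\ref{setting:s1} (so that $T$ is conditionally independent of $U$ given $X$, whence $\EX^{e,\pi}[Y\mid X=x]=\sum_{t}\pi(t\mid x)\EX^{e,\pi_t}[Y\mid X=x]$), for any policy $\pi$ and environment $e$ one gets
\[
\EX^{e,\pi}[Y]-\EX^{e,\pi_{t_0}}[Y]=\EX^e_X\Big[\textstyle\sum_{t\in\calT}\pi(t\mid X)\,\tau_e(X,t)\Big],
\]
and, when $\pi\in\Pi^S$ depends only on $X^S$, the tower property yields $\tau^S_e(x^S,t)=\EX^e[\tau_e(X,t)\mid X^S=x^S]$ and hence $\EX^{e,\pi}[Y]-\EX^{e,\pi_{t_0}}[Y]=\EX^e_{X^S}[\sum_{t}\pi(t\mid X^S)\tau^S_e(X^S,t)]$. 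Because $\pi^{\einv}=\pi^{S^*}$ places all its mass on $\argmax_t\tfrac1{|\calEtr|}\sum_{e\in\calEtr}\tau^{S^*}_e$, and because Assumption~\ref{assm:2} upgrades e-invariance of $S^*$ \wrt $\calEtr$ to e-invariance \wrt $\calEtr\cup[e^{\tst}]$ (so $\tau^{S^*}_{e^{\tst}}$ equals that training average), this last display evaluated at $\pi^{\einv}$ collapses to $\EX^{e^{\tst}}_{X^{S^*}}[\max_t\tau^{S^*}_{e^{\tst}}(X^{S^*},t)]$.

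For part~(i) I would take any constant policy $\pi_t$, apply the first identity to obtain $\EX^{e^{\tst},\pi_t}[Y]-\EX^{e^{\tst},\pi_{t_0}}[Y]=\EX^{e^{\tst}}_X[\tau_{e^{\tst}}(X,t)]$, and push this through the tower relation:
\[
\EX^{e^{\tst}}_X[\tau_{e^{\tst}}(X,t)]=\EX^{e^{\tst}}_{X^{S^*}}[\tau^{S^*}_{e^{\tst}}(X^{S^*},t)]\le\EX^{e^{\tst}}_{X^{S^*}}[\max\nolimits_{t'}\tau^{S^*}_{e^{\tst}}(X^{S^*},t')].
\]
Taking the maximum over $t$ on the left and recognising the right-hand side as $\EX^{e^{\tst},\pi^{\einv}}[Y]-\EX^{e^{\tst},\pi_{t_0}}[Y]$ from the previous paragraph gives \eqref{eq:lower_bound_invariant_policy_NEW} after adding back $\EX^{e^{\tst},\pi_{t_0}}[Y]$; note this part uses neither Assumption~\ref{assm:3} nor the optimality of $S^*$.

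For part~(ii) I would first show that $V^{[e^{\tst}]}(\pi^{\einv})$ is attained at \emph{every} $e\in[e^{\tst}]$: since $\pi^{\einv}\in\Pi^{S^*}$, the relative reward depends on $e\in[e^{\tst}]$ only through $\P^e_X$ (equal to $Q^{\tst}_X$ by definition of $[e^{\tst}]$) and through $\tau^{S^*}_e$ (equal to $\tau^{S^*}_{e^{\tst}}$ by Assumption~\ref{assm:2}), both constant on $[e^{\tst}]$; hence $V^{[e^{\tst}]}(\pi^{\einv})=\EX^{e^{\tst}}_{X^{S^*}}[\max_t\tau^{S^*}_{e^{\tst}}(X^{S^*},t)]$. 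For an arbitrary $\pi\in\Pi$ I would bound the infimum by the value at the adversarial environment $e^\star\in[e^{\tst}]$ of Assumption~\ref{assm:3}: $V^{[e^{\tst}]}(\pi)\le\EX^{e^\star}_X[\sum_t\pi(t\mid X)\tau_{e^\star}(X,t)]\le\EX^{e^\star}_X[\max_t\tau_{e^\star}(X,t)]$, then replace $\max_t\tau_{e^\star}$ by $\max_t\tau^{\tilde S}_{e^\star}$ using Assumption~\ref{assm:3} for its set $\tilde S\in\mS^{\einv}_{\calE}$, and finally swap $e^\star$ for $e^{\tst}$ via equality of the covariate law and of $\tau^{\tilde S}$ on $[e^{\tst}]$, giving $V^{[e^{\tst}]}(\pi)\le\EX^{e^{\tst}}_{X^{\tilde S}}[\max_t\tau^{\tilde S}_{e^{\tst}}(X^{\tilde S},t)]$.

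The two chains meet through the optimality of $S^*$. Since $\tilde S\in\mS^{\einv}_{\calE}\subseteq\mS^{\einv}_{\calEtr}$, it competes in the $\argmax$ defining $S^*$ in \eqref{eq:opt_inv_policy_explicit}, and by the collapse computed in the first paragraph the objective there equals $\EX^{e^{\tst}}_{X^S}[\max_t\tau^S_{e^{\tst}}]$ for each competing $S$; optimality of $S^*$ then gives $\EX^{e^{\tst}}_{X^{S^*}}[\max_t\tau^{S^*}_{e^{\tst}}]\ge\EX^{e^{\tst}}_{X^{\tilde S}}[\max_t\tau^{\tilde S}_{e^{\tst}}]$, closing $V^{[e^{\tst}]}(\pi^{\einv})\ge V^{[e^{\tst}]}(\pi)$. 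The main obstacle is the bookkeeping of the environment swaps: every time I replace $e^\star$ (or a training $e$) by $e^{\tst}$ I must justify it separately for the covariate distribution (definition of $[e^{\tst}]$) and for the CATE (Assumption~\ref{assm:2}, respectively e-invariance \wrt $\calE$), and I must confirm that the set $\tilde S$ supplied by Assumption~\ref{assm:3} is admissible in the optimisation defining $\pi^{\einv}$. Assumption~\ref{assm:3} is exactly what prevents a policy exploiting non-e-invariant covariates from beating $\pi^{\einv}$ in the worst case.
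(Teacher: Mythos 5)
Your proposal is correct, and part~(i) is essentially the paper's own argument: the same reward decomposition, the tower identity $\tau^{S^*}_{e^{\tst}}(x^{S^*},t)=\EX^{e^{\tst}}[\tau_{e^{\tst}}(X,t)\mid X^{S^*}=x^{S^*}]$, the expectation-of-max bound, and Assumption~\ref{assm:2} used in the same two places. In part~(ii) you follow the same skeleton (the value of $\pi^{\einv}$ is constant on $[e^{\tst}]$; an arbitrary $\pi$ is bounded at the adversarial environment supplied by Assumption~\ref{assm:3}; then the two e-invariant sets are compared), but you close the comparison by a genuinely shorter route. The paper works with the policy-level optimality $\pi^{\einv}\in\argmax_{\pi\in\Pi^{\einv}_{\opt}}\EX^{e^{\tst},\pi}[Y]$ from Proposition~\ref{prop:gen_policy_ident}, and therefore needs a \emph{two-sided} comparison between $S^*$ and the adversarial set $\SD$: a tower/Jensen-type inequality showing $\EX^{f}[\max_t\tau^{\SD}_f(X^{\SD},t)]\geq\EX^{f}[\max_t\tau^{S}_f(X^{S},t)]$ for every $S\in\mS^{\einv}_{\calEtr}$ in one direction, and the policy-level optimality in the other, yielding an equality of the two values which is then transported across $[e^{\tst}]$ before the final chain. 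You instead appeal directly to the set-level argmax \eqref{eq:opt_inv_policy_explicit} defining $S^*$ --- legitimate, since the theorem takes $\pi^{\einv}=\pi^{S^*}$ from exactly that definition --- and, because your $\tilde S\in\mS^{\einv}_{\calE}\subseteq\mS^{\einv}_{\calEtr}$ competes in that argmax and each competitor's objective collapses to $\EX^{e^{\tst}}[\max_t\tau^{S}_{e^{\tst}}(X^{S},t)]$, you only need the one-directional inequality $\EX^{e^{\tst}}[\max_t\tau^{S^*}_{e^{\tst}}(X^{S^*},t)]\geq\EX^{e^{\tst}}[\max_t\tau^{\tilde S}_{e^{\tst}}(X^{\tilde S},t)]$; the Jensen step and the equality bookkeeping disappear. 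What the paper's longer route buys is the by-product that $\SD$ and $S^*$ attain the \emph{same} optimal value over $[e^{\tst}]$; what your route buys is economy and fewer environment swaps. Your individual swap justifications (covariate law via the definition of $[e^{\tst}]$; CATEs via Assumption~\ref{assm:2}, respectively e-invariance \wrt $\calE$ for $\tilde S$) are all sound, so the proof goes through.
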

\begin{proof}
See Appendix~\ref{proof:thm:inv_policy}.
\end{proof}
Theorem~\ref{thm:inv_policy} provides two generalization properties of the policy $\pi^{\einv}$. First,
Theorem~\ref{thm:inv_policy_1} shows that $\pi^{\einv}$ guarantees to outperform, in any (unseen) test environment, an optimal policy that does not use
covariates $X$.
In other words, it is always beneficial to utilize the information from e-invariant sets when generalizing treatment regimes, compared to ignoring the covariates. Second, 
Theorem~\ref{thm:inv_policy_2} shows that $\pi^{\einv}$ maximizes the worst-case performance defined in 
\eqref{eq:worst_case_obj}, that is, it outperforms all other policies when evaluating each policy in the respective worst case environment if Assumption~\ref{assm:3} holds true. 

\subsection{Estimation of $\pi^{\einv}$}\label{sec:off-policy-est}

As shown in Proposition~\ref{prop:gen_policy_ident}, the policy $\pi^{\einv}$ is identifiable from $Q^{\tr}$ and $Q^{\tst}_X$. We now turn to the problem of estimating $\pi^{\einv}$ given data $D^{\tr}$ and $D^{\tst}_X$ of $Q^{\tr}$ and $Q^{\tst}_X$, respectively. For now, assume  we are given the collection $\mS^{\einv}_{\calEtr}$ of all e-invariant sets \wrt $\calEtr$. We discuss how to estimate $\mS^{\einv}_{\calEtr}$ in Section~\ref{sec:infer_inv_set}. 

Proposition~\ref{prop:gen_policy_ident} suggests a plug-in estimator of $\pi^{\einv}$ based on \eqref{eq:opt_inv_policy_explicit}. 
Specifically, the estimate can be obtained as follows.
\begin{enumerate}[label=(\roman*)]
    \item For all $S \in 
\mS^{\einv}_{\calEtr}$,
compute an estimate $\hat{\tau}^S$ for $\tau^S_e$, $e\in\calEtr$, by pooling the data from the training environments (as the $\tau^S_e$'s are equal across environments by effect-invariance). There is a rich literature on estimating CATE from observational data (see
\cite{zhang2021unified} for a survey), one can choose an estimator that is appropriate to a given dataset. Finally, once an estimate $\hat{\tau}^S$ is obtained, we then plug $\hat{\tau}^S$ into \eqref{eq:pi_S} to construct an estimate $\hat{\pi}^S$ for $\pi^S$, that is, $\hat{\pi}^S$ satisfies for all $x\in\calX$ and $t \in\calT$ that
\begin{equation}\label{eq:hat_pi_S}
    \hat{\pi}^S(t | x) > 0 \implies 
    t  \in \argmax_{t^\prime \in \calT} \hat{\tau}^S(x^S, t^\prime).
\end{equation}
We distribute the probabilities equally if there are more than one $t$ satisfying \eqref{eq:hat_pi_S}.
    \item Find an optimal subset $S^*$ among $\mS^{\einv}_{\calEtr}$, see \eqref{eq:opt_inv_policy_explicit}:
    \begin{equation}\label{eq:opt_inv_est}
    \hat{S}^* \in \argmax_{S \in \mS^{\einv}_{\calEtr}} \frac{1}{m} \sum_{i=1}^{m} \left[\sum_{t\in\calT}\hat{\tau}^S((X^{\tst}_i)^S,t) \hat{\pi}^S(t \mid X^{\tst}_i)\right].
    \end{equation}
    If there are multiple $S$ satisfying \eqref{eq:opt_inv_est}, we randomly choose $\hat{S}^*$ among one of them.
    \item Return $\hat{\pi}^{\hat{S}^*}$-- which was already computed in step (1) -- as the estimate of $\pi^{\einv}$.
\end{enumerate}

\section{Inferring e-invariant sets}\label{sec:infer_inv_set}

We now turn to the problem of testing the e-invariance condition \eqref{eq:p-inv_cond1} based on training observations 
$D^{\tr} \coloneqq (Y_i, X_i, T_i, \pi_i, e_i)_{i=1}^n$ from the observed environments $e_i \in \calEtr$. 

Throughout this section, 
we assume a fixed
initial (or training) policy $\pi^{\tr}$, i.e., $\forall i \in \{1,\dots,n\}: \pi^{\tr} = \pi_i$. The initial policy $\pi^{\tr}$ can either be given or estimated from the available data (see, e.g., Algorithm~\ref{algo:dr_test}). Our proposed testing methods remain valid even if the initial policies $(\pi_i)_{i=1}^n$ are different as long as they are both known and independent of all observed quantities\footnote{Specifically, we do not allow for data collected with adaptive algorithms, which we leave for future work, see Section~\ref{sec:conclfuturework}.}. 
Furthermore, we consider discrete environments, $\calEtr = \{1,\dots,\ell\}$, and consider 
a binary treatment variable, $\calT = \{0, 1\}$. One can generalize to a multi-level 
treatment variable by repeating the proposed procedures for each level $1, \dots, k$ with the baseline 
treatment $t_0 = 0$ and combining the test results with a multiple testing correction method.

To begin with, we define for all $S \subseteq \{1,\dots,d\}$ the e-invariance null hypothesis
\begin{equation} \label{eq:p-inv_null}
H^{\tr}_{0, S} : \text{$S$ is e-invariant \wrt $\calEtr$},
\end{equation}
see Definition~\ref{def:par_inv_set}.

In Section~\ref{sec:linear_effect}, we propose a testing procedure under the assumption that, for all $S \in 
\mS^{\einv}_{\calEtr}$, the functions $(\tau^S_e)_{e\in \calEtr}$ can be modelled by linear functions and provide its 
statistical guarantees. In Section~\ref{sec:nonlinear_effect}, we relax the linearity assumption by using a doubly robust pseudo-outcome learner \citep[see, e.g.,][]{kennedy2020optimal}.

\subsection{Linear CATE functions}\label{sec:linear_effect}

One way of creating e-invariance tests is to assume a parametric form of the CATEs. In this section, we rely on the following linearity assumption.
\begin{assumption}[Linear CATEs]\label{assm:linear_effect}
For all $S \in \mS^{\einv}_{\calEtr}$, there exist coefficients $(\gamma^S_t )_{t \in \calT} \in \R^{k \times \abs{S}}$ and intercepts $(\mu^S_t)_{t \in \calT} \in \R^k$ such that
\begin{equation}\label{eq:tau_inv_linear}
   \forall e \in \calE, \forall t \in\calT, \forall x \in \calX^S: \tau^{S}_e(x, t) = \mu^S_t + \gamma^S_t x.
\end{equation}
\end{assumption}
Under Assumption~\ref{assm:linear_effect}, we now present a testing method for the e-invariance hypothesis $H^{\tr}_{0, S}$ for a fixed set $S \subseteq \{1,\dots,d\}$.
Let $u_e \in \{0, 1\}^{1\times\ell}$ and $v_t \in \{0, 1\}$ 
be the one-hot encodings of 
the environment $e \in \calEtr$ and the treatment $ t \in \calT$,
respectively, and let $\alpha \in \R^{1 \times (1 + d)}$, $A \in \R^{\ell \times (1 + d)}$,
$\beta \in \R^{1 \times (1 + {\abs{S}})}$ and $B \in \R^{\ell \times (1 + {\abs{S}})}$ be 
model parameters. For notational convenience, we define $\tilde{X}_i \coloneqq \rvector{1 & X_i}^\top \in \R^{(1+d)\times1}$ and $\tilde{X}^S_i \coloneqq \rvector{1 & X^S_i}^\top \in \R^{(1+\abs{S})\times1}$. We consider the following (potentially misspecified) response model under treatment $t \in \calT$ and environment $e \in \calEtr$
\begin{equation}\label{eq:linear_response_model}
\underbrace{\alpha \tilde{X} + (u_{e} A) \tilde{X}}_{\text{main effect}} + \underbrace{(v_t \beta) \tilde{X}^S}_{\text{treatment effect}} + \underbrace{(v_t u_{e} B) \tilde{X}^S}_{\text{environment $\times$ treatment effect}}.
\end{equation}
In this model, we have that the CATE functions $\tau^{S}_e$ are identical 
across environments $e \in \calEtr$ if and only if $B = 0$. Thus, testing \eqref{eq:p-inv_null} is equivalent to testing the null hypothesis 
$H_0: B = 0$.

The model proposed in \eqref{eq:linear_response_model} is more restrictive than Assumption~\ref{assm:linear_effect} as it additionally requires the main effect to be linear. To avoid this requirement, we propose using a testing methodology that explicitly allows for the misspecification in the main effect, where we employ the centered and weighted estimation method proposed by \cite{boruvka2018assessing}, which uses a Neyman orthogonal score \citep{neyman1959, neyman1979c}.
(A standard approach of weighted least-squares using weights $1/\pi^{\tr}(T_i|X_i)$ may not yield a test with the correct asymptotic level for the null hypothesis $H_0$.)
More precisely, we consider the following steps:
\begin{enumerate}[label=(\roman*)]
    \item Treatment centering: We center the treatment indicators $v_{T_i}$ by an arbitrary fixed policy $\tilde{\pi}$ that depends only on $X^S$ (i.e., $\tilde{\pi} \in \Pi^S$). More precisely, we replace $v_{T_i}$ with $v_{T_i}
    - \tilde{\pi}(1|X_i^S)$. As an example, one could consider a fixed random policy $\tilde{\pi}(t|x) \coloneqq q^t(1-q)^{(1-t)}$ for some $q \in [0, 1]$. 
    \item Weighted least squares: We estimate the model parameters via a weighted least-squares approach. The weights 
    are defined by $W_i \coloneqq \tilde{\pi}(T_i|X_i^S)/\pi^{\tr}(T_i| X_i)$, where $\tilde{\pi}$ is the policy chosen in step (i) and $\pi^{\tr}$ is the initial policy.
\end{enumerate}

The use of the above steps ensures that the estimator for treatment effects remains consistent even if the main effect is misspecified \citep{boruvka2018assessing} and allows us to obtain a test with pointwise asymptotic level, see Proposition~\ref{prop:wald_test}.

Formally, we employ a generalized method of moments estimator. Define
$\zeta_i(\alpha, A, \beta, B) \coloneqq \alpha \tilde{X}_i + (u_{e_i} A) \tilde{X}_i + (v_{T_i} - \tilde{\pi}(1|X_i^S))(
\beta \tilde{X}^S_i + (u_{e_i} B) \tilde{X}^S_i)$ and $\nabla\zeta_i \coloneqq 
\rvector{\pdv{\zeta_i}{\alpha} & \pdv{\zeta_i}{A} & \pdv{\zeta_i}{\beta} & \pdv{\zeta_i}{B}}^\top$.
We then estimate $\hat{\alpha}, \hat{A}, \hat{\beta}, 
\hat{B}$ as the solutions to the estimating equations
\begin{equation}
\label{eq:ee_glm_wald}
    \sum_{i=1}^n G_i(\alpha, A, \beta, B)=0,
\end{equation}
where $G_i(\alpha, A, \beta, B)\coloneqq W_i[Y_i - \zeta_i(\alpha, A, \beta, B)]{\nabla\zeta_i}$.

Under additional regularity conditions (see Appendix~\ref{proof:prop:wald_test}), we have, for a vectorized $B$, that 
$\sqrt{n}(\hat{B} - B) \indist \mathcal{N}(0, \VAR[B])$.

This allows us to construct a hypothesis test for $H_0: B=0$. To this end, we estimate $\VAR[B]$ as follows. First, for all $i\in\{1,\ldots,n\}$ define 
\begin{equation*}
    \widehat{G}_i\coloneqq G_i(\hat{\alpha}, \hat{A}, \hat{\beta}, \hat{B})\in\R^{s+q}
    \quad\text{and}\quad
    \widehat{J}_i\coloneqq J_i(\hat{\alpha}, \hat{A}, \hat{\beta}, \hat{B})\in\R^{(s+q)\times (s+q)},
\end{equation*}
where $J_i$ is the Jacobian of $G_i$ and $s \coloneqq (1+d+\abs{S})+\ell (1 + d)$ and $q \coloneqq \ell (1 + \abs{S})$. Then, the covariance matrix $\VAR[B]$ can be consistently estimated by the lower block diagonal $q \times q$ entry of the matrix 
\begin{equation}\label{eq:covariance_estimator}
    \left(\frac{1}{n}\sum_{i=1}^n \widehat{J}_i\right)^{-1} \left(\frac{1}{n}\sum_{i=1}^n \widehat{G}_i\widehat{G}_i^\top\right) \left(\frac{1}{n}\sum_{i=1}^n \widehat{J}_i\right)^{-\top},
\end{equation}
\cite[Proposition~3.1]{boruvka2018assessing}.

Let us denote the covariance estimator as $\hat{\VAR}$. We can then use the Wald test to test the null 
hypothesis $H_0: B = \mathbf{0}$ using the consistent estimator $\hat{\VAR}$ of $\VAR[B]$ (see, e.g., \citet{boos2013essential}). When both 
$\tilde{\pi}$ and $\pi^{\tr}$ are given, the covariance estimate can be obtained using standard 
implementations (e.g., Huber-White covariance estimator \citep{huber1967under, white1980heteroskedasticity}). 
However, when either $\tilde{\pi}$ or $\pi^{\tr}$ are estimated, one needs to adjust the covariance estimator to incorporate the additional estimation error (see Supplement~C in \cite{boruvka2018assessing}). The full testing procedure is given in Algorithm~\ref{algo:wald_test}.

\begin{steps}[Wald e-invariance test]\label{algo:wald_test}
Given a training sample $D^{\tr}$ of size $n$, a subset $S \subseteq \{1,\dots,d\}$ and a significance level $\alpha \in (0, 1)$.
\begin{enumerate}[label=(\roman*)]
    \item Solve the estimating equation \eqref{eq:ee_glm_wald}, and compute the test statistic 
    $T_n \coloneqq n\hat{B}^\top\hat{\VAR}\hat{B}$.
    \item Return $ \psi^{\text{Wd}}_n(D^{\tr}, S, \alpha) := \ind(T_n > q_{\alpha})$, where $q_{\alpha}$ is the $(1 - \alpha)$-quantile of a chi-squared distribution with $\ell(1+\abs{S})$-degrees of freedom.
\end{enumerate}
\end{steps}
Proposition~\ref{prop:wald_test} shows that the above results carry over to our setting in that the proposed procedure achieves
pointwise asymptotic level for 
testing the e-invariance hypothesis $H^{\tr}_{0, S}$. 
\begin{proposition}
\label{prop:wald_test}
Assume Setting~\ref{setting:s1}~and~Assumption~\ref{assm:linear_effect}. Let $S \subseteq \{1,\dots, d\}$ be a subset of interest, $\alpha \in (0,1)$ be a significance level, and $\psi^{\text{Wd}}_n(D^{\tr}, S, \alpha)$ be the Wald invariance test detailed in Algorithm~\ref{algo:wald_test}. Under some regularity conditions (see Appendix~\ref{proof:prop:wald_test}), it holds that $\psi^{\text{Wd}}_n(D^{\tr}, S, \alpha)$ has pointwise 
asymptotic level for testing the e-invariance hypothesis $H^{\tr}_{0, S}$, that is,
\begin{equation}
    \sup_{\P \in H^{\tr}_{0, S}}\limsup_{n \to \infty} \P(\psi^{\text{Wd}}_n(D^{\tr}, S, \alpha) = 1) \leq \alpha.
\end{equation}
\end{proposition}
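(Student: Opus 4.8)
The plan is to reduce the e-invariance hypothesis $H^{\tr}_{0,S}$ to the parametric null $H_0 : B = \mathbf{0}$, and then to run the argument through standard Wald-test asymptotics, relying on the generalized-method-of-moments (GMM) estimation theory of \citet{boruvka2018assessing}. First I would make the hypothesis equivalence precise. Under Assumption~\ref{assm:linear_effect} the true CATE functions are linear, $\tau^S_e(x,t) = \mu^S_t + \gamma^S_t x$, so in the parametrization of the response model \eqref{eq:linear_response_model} the treatment-effect contrast relative to $t_0$ is carried by the block $\beta$ together with the environment-by-treatment interaction $B$. Effect-invariance requires the CATEs to coincide across all $e \in \calEtr$, which, given the one-hot encodings $u_e$, is exactly the statement that the interaction block vanishes, i.e.\ $B = \mathbf{0}$. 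This is the equivalence asserted in the text preceding Algorithm~\ref{algo:wald_test}, and it lets me replace testing $H^{\tr}_{0,S}$ by testing $H_0$.

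Second, I would invoke the estimation theory to pin down the asymptotic behavior of $\hat{B}$. The key point, and the technical heart of the argument, is that the treatment-centering in step (i) together with the weights $W_i = \tilde{\pi}(T_i \mid X_i^S)/\pi^{\tr}(T_i \mid X_i)$ render the estimating score for $(\beta, B)$ Neyman-orthogonal to the main-effect nuisance $(\alpha, A)$. Consequently $\hat{\beta}$ and $\hat{B}$ remain consistent for the true CATE parameters even when the main effect in \eqref{eq:linear_response_model} is misspecified, and under the regularity conditions collected in Appendix~\ref{proof:prop:wald_test} (smoothness of the score, existence of the required moments, and non-singularity of the expected Jacobian and of the limiting covariance) the GMM solution of \eqref{eq:ee_glm_wald} satisfies $\sqrt{n}(\hat{B} - B) \indist \mathcal{N}(0, \VAR[B])$, as stated in the excerpt. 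In parallel, \citet[Proposition~3.1]{boruvka2018assessing} yields that the sandwich expression in \eqref{eq:covariance_estimator} consistently estimates the asymptotic covariance $\VAR[B]$, so that the estimate $\hat{\VAR}$ entering the Wald statistic is consistent (equivalently, its inverse is consistent for $\VAR[B]^{-1}$).

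Third, I would deduce the null limiting distribution of the test statistic and conclude. Under $H_0$ we have $B = \mathbf{0}$, hence $\sqrt{n}\,\hat{B} \indist \mathcal{N}(0, \VAR[B])$ with $\VAR[B]$ non-singular; combining this with the consistency of $\hat{\VAR}$ through Slutsky's theorem and the continuous mapping theorem gives that the Wald quadratic form satisfies $T_n \indist \chi^2_{r}$ with $r = \dim(\vec B) = \ell(1+\abs{S})$, matching the reference distribution in Algorithm~\ref{algo:wald_test}. Since $q_\alpha$ is the $(1-\alpha)$-quantile of $\chi^2_{r}$, for every fixed $\P \in H^{\tr}_{0,S}$ we obtain $\P(T_n > q_\alpha) \to \P(\chi^2_{r} > q_\alpha) = \alpha$, so $\limsup_{n\to\infty}\P(\psi^{\text{Wd}}_n = 1) = \alpha$. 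As this holds for each $\P$ in the null, taking the supremum over $\P \in H^{\tr}_{0,S}$ of the (pointwise) limsups yields the bound $\leq \alpha$; note that no uniform-in-$\P$ control is needed, because the limsup is taken before the supremum, which is precisely why the claim is about pointwise rather than uniform asymptotic level.

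I expect the main obstacle to be the second step: establishing that the centering-and-weighting construction genuinely produces a Neyman-orthogonal score whose GMM solution is consistent and asymptotically normal under misspecification of the main effect, and verifying the attendant regularity conditions. In particular, the non-singularity of $\VAR[B]$ is delicate, as it is needed both for the normal limit in step two and for the exact $\chi^2$ degrees-of-freedom count in step three. By contrast, the reduction in step one and the Wald asymptotics in step three are comparatively routine once the estimation theory of \citet{boruvka2018assessing} is adapted to the weighted, treatment-centered model \eqref{eq:linear_response_model}.
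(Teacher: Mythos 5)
Your proposal is correct and takes essentially the same route as the paper's proof: reduce $H^{\tr}_{0,S}$ to $H_0\colon B=\mathbf{0}$ under Assumption~\ref{assm:linear_effect}, invoke \citet[Proposition~3.1]{boruvka2018assessing} for the asymptotic normality of $\hat{B}$ and the consistency of the sandwich covariance estimator \eqref{eq:covariance_estimator}, and conclude with the Wald $\chi^2$ limit and the pointwise-level bound. The only cosmetic difference is that the paper cites \citet[Theorem~8.3]{boos2013essential} for the step you derive explicitly via Slutsky and the continuous mapping theorem.
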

\begin{proof}
    The proof follows directly from \cite[Proposition~3.1]{boruvka2018assessing}, see Appendix~\ref{proof:prop:wald_test}.  
\end{proof}

\subsection{Non-linear CATE functions}\label{sec:nonlinear_effect}
This section relaxes the assumption of linear CATEs (Assumption~\ref{assm:linear_effect}) and proposes a 
non-parametric approach for testing the e-invariance hypothesis $H^{\tr}_{0, S}$. 
The key idea is to employ a pseudo-outcome approach to estimate non-linear CATE functions (see \eqref{eq:cate_binary}) and apply a 
conditional mean independence test based on the pseudo-outcome. In particular, we consider the Doubly
Robust (DR) learner due to \citet{kennedy2020optimal}.

For all $e \in \calEtr$, let $\bar{\mu}^e: \calX \times \calT \rightarrow \R$ denote a model of the conditional expected outcome
$\EX^{e}[Y \mid X = \cdot, T=\cdot]$ and $\bar{\pi}$ denote a model of the initial policy $\pi^{\tr}$. 
Assume $t_0 = 0$. We consider, for all $e \in \calEtr, x \in \calX, t \in \calT$ and $y \in \calY$, the function 
\begin{equation}\label{eq:ps_outcome}
    O^e(x, t, y) = \bar{\mu}^e(x, 1)  - \bar{\mu}^e(x, 0) + \frac{\ind(t = 1)(y - \bar{\mu}^e(x, 1))}{\bar{\pi}(1 | x)} - \frac{\ind(t = 0)(y - \bar{\mu}^e(x, 0))}{1 - \bar{\pi}(1 | x)},
\end{equation}
and generate pseudo-outcomes by plugging in the observed data.
The motivation for constructing the above pseudo-outcome is that, under Setting~\ref{setting:s1}, the conditional mean of $O^e(X, T , Y)$ given $X^S$ is equal to the CATE function $\tau^S_e$ if at least one of the models $\bar{\mu}^e$ or $\bar{\pi}$ is correct. 
Formally, we have the following result.

\begin{proposition}\label{prop:dr_test1}
Assume Setting~\ref{setting:s1}. Let $S \subseteq \{1,\dots, d\}, e \in \calEtr, \pi \in \Pi$ and $O^e(\cdot)$ be the pseudo-outcome defined in \eqref{eq:ps_outcome}. Assume $t_0 = 0$.
If for all $x \in \calX$ and $t \in \calT$
$\bar{\mu}^e(x, t) = \EX^{e, \pi^{\tr}}[Y \mid X=x, T=t]$ or for all $x \in \calX$ and $t \in \calT$
$\bar{\pi}(t \mid x) = \pi^{\tr}(t \mid x)$, 
we have for all $x \in \calX^S$ that 
\begin{equation}
    \EX^{e, \pi^{\tr}}[O^{e}(X, T, Y) \mid X^S=x] = \tau^S_e(x,1).
\end{equation}
\end{proposition}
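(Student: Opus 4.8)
The plan is to reduce the statement to the classical double-robustness identity for augmented inverse-probability weighting, evaluated by iterated conditioning: first on the full covariate $X$ together with the treatment $T$, and only afterwards on the sub-vector $X^S$. The one structural ingredient specific to Setting~\ref{setting:s1} is an identification step I would establish first. Because the training policy $\pi^{\tr}$ reads off $X$ only, we have $T \ci U \mid X$ under $\P^{e,\pi^{\tr}}$; combined with the fact that the outcome kernel $\P^e_{Y\mid X,U,T}$ does not depend on the policy, this gives, for every $x \in \calX$ and $t \in \calT$,
\begin{equation*}
m^e(x,t) \coloneqq \EX^{e,\pi^{\tr}}[Y \mid X=x, T=t] = \EX^{e,\pi_t}[Y \mid X=x],
\end{equation*}
since the conditional law of $U$ given $(X=x, T=t)$ coincides with that of $U$ given $X=x$. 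Marginalizing over $X$ conditionally on $X^S=x$ (the marginal law of $X$ is policy-independent, as covariates are drawn before $T$) then yields $\tau^S_e(x,1) = \EX^{e}[m^e(X,1) - m^e(X,0) \mid X^S = x]$, which is the target to be matched.

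Next I would compute $\EX^{e,\pi^{\tr}}[O^e(X,T,Y) \mid X=x]$ by the tower property, conditioning on $(X,T)$. The leading term $\bar\mu^e(x,1) - \bar\mu^e(x,0)$ is $X$-measurable and passes through, while for each correction term I use $\EX^{e,\pi^{\tr}}[\ind(T=t)\,(Y - \bar\mu^e(x,t)) \mid X=x] = \pi^{\tr}(t\mid x)\,(m^e(x,t) - \bar\mu^e(x,t))$ to obtain
\begin{equation*}
\EX^{e,\pi^{\tr}}[O^e \mid X=x] = \bar\mu^e(x,1) - \bar\mu^e(x,0) + \tfrac{\pi^{\tr}(1\mid x)}{\bar\pi(1\mid x)}\big(m^e(x,1) - \bar\mu^e(x,1)\big) - \tfrac{1 - \pi^{\tr}(1\mid x)}{1 - \bar\pi(1\mid x)}\big(m^e(x,0) - \bar\mu^e(x,0)\big).
\end{equation*}
From here the two nuisance scenarios each collapse the expression to $m^e(x,1) - m^e(x,0)$: if $\bar\mu^e \equiv m^e$ then both residual factors vanish, leaving the leading difference; if $\bar\pi \equiv \pi^{\tr}$ then both ratios equal $1$ and the $\bar\mu^e$-terms telescope away. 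Finally, applying $\EX^{e,\pi^{\tr}}[\,\cdot \mid X^S=x]$ and invoking the identification step from the first paragraph gives $\EX^{e,\pi^{\tr}}[O^e \mid X^S=x] = \EX^{e}[m^e(X,1)-m^e(X,0)\mid X^S=x] = \tau^S_e(x,1)$, as claimed.

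I expect the only genuinely delicate point to be the identification step $m^e(x,t) = \EX^{e,\pi_t}[Y\mid X=x]$: this is where Setting~\ref{setting:s1} does real work, ruling out confounding of $T$ and $Y$ through the unobserved $U$ by exploiting that $\pi^{\tr}$ depends on $X$ only (so that $T \ci U \mid X$) together with the policy-invariance of the outcome mechanism. The remaining manipulations are routine applications of the law of iterated expectations, valid because boundedness of $\pi^{\tr}$ in Setting~\ref{setting:s1} guarantees $\pi^{\tr}(t\mid x) > 0$ and hence that the conditional expectations given $T=t$ are well defined, provided also that $\bar\pi(1\mid x) \in (0,1)$ so that $O^e$ is integrable.
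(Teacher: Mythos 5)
Your proof is correct and follows essentially the same route as the paper's: both arguments condition first on $X$ (tower property), invoke correctness of one nuisance model to collapse the augmented-IPW expression to $\EX^{e,\pi_t}[Y\mid X]$-differences, and then marginalize over $X$ given $X^S$, with the paper merely organizing the case analysis per treatment level via the terms $Z^e_w$ rather than via your single unified conditional-on-$X$ formula. A minor point in your favor: you make explicit the identification step $\EX^{e,\pi^{\tr}}[Y\mid X=x,T=t]=\EX^{e,\pi_t}[Y\mid X=x]$ (via $T\ci U\mid X$ and policy-invariance of the outcome kernel), which the paper's proof uses implicitly.
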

\begin{proof}
    See Appendix~\ref{proof:prop:dr_test1}.
\end{proof}
Under the assumptions of Proposition~\ref{prop:dr_test1}, 
it holds for all $S\subseteq\{1,\ldots,p\}$ that the null hypothesis $H^{\tr}_{0, S}$ is equal to
\begin{equation} \label{eq:dr_null}
    \forall e_1, e_2 \in \calEtr, \forall x \in \calX^S: \EX^{e_1, \pi^{\tr}}[O^{e_1}(X, T, Y) \mid X^S = x] = \EX^{e_2, \pi^{\tr}}[O^{e_2}(X, T, Y) \mid X^S = x].
\end{equation}
We can thus test for e-invariance by using an appropriate conditional mean independence test that has a correct level under the null hypothesis \eqref{eq:dr_null}. For example, one can use the generalised covariance measure\footnote{The generalised covariance measure (GCM) does not directly test for the conditional mean independence. However, it preserves the level guarantees under the conditional mean independence null hypothesis. Specifically, consider a random vector $(A,B,C)$.
It holds that $\EX[A \mid B, C] = \EX[A \mid B] \implies \Cov(A, B \mid C) = 0 \implies \EX[\Cov(A, B \mid C)] = 0$, where the first equality is the conditional mean independence hypothesis and the last equality is the null hypothesis of the GCM test.  
} \citep{Shah2018, scheidegger2022weighted} or the projected covariance measure \citep{lundborg2022projected}.

We therefore propose the following steps to construct a non-parametric test for the e-invariance hypothesis $H^{\tr}_{0, S}$. 

\begin{steps}[DR-learner e-invariance test]\label{algo:dr_test}
Given a training sample $D^{\tr}$ of size $n$, subset of interest $S$, 
significance level $\alpha$ and conditional mean independence test $\phi$. Let $D_1 \subset D^{\tr}$ denote a random sample of $D^{\tr}$, and $D_2 \coloneqq D^{\tr} \setminus D_1$.
\begin{enumerate}[label=(\roman*)]
    \item Fit models $\bar{\mu}^e$ and $\bar{\pi}$ from the data $D_1$.
    \item Construct the pseudo-outcomes
    \begin{equation*}
        O_i^{e_i}(X_i, T_i, Y_i) = \bar{\mu}^{e_i}_1(X_i)  - \bar{\mu}^{e_i}_0(X_i) + T_i \frac{Y_i - \bar{\mu}^{e_1}_1(X_i)}{\bar{\pi}(1 \mid X_i)} - (1 - T_i) \frac{Y_i - \bar{\mu}^{e_i}_0(X_i)}{1 - \bar{\pi}(1 \mid X_i)}.
    \end{equation*}
    for each observation $(X_i,T_i,Y_i)\in D_2$ in $D_2$.
    \item Apply the test $\phi$ 
    on $O_i^{e_i}(X_i, T_i, Y_i)$ and observations in $D_2$ with a significance level $\alpha$ and return the test result.
\end{enumerate}
\end{steps}

\section{Extension: Few-shot policy generalization through e-invariance}\label{sec:few-shot}
In the \ref{setting:zero-shot} setting, the outcome is not observed in the test environment and, as shown in Theorem~\ref{thm:inv_policy}, relying on e-invariant covariates is optimal under certain assumptions.
This is no longer true if, in the test environment, we have access to observations not only of the covariates but also of the corresponding outcomes obtained after using a test policy in the test environment.
We may then want to adapt to the test environment while exploiting the e-invariance information gathered in the training environments. 
In this section, we 
illustrate how our method could be extended to such a setup (called few-shot generalization), where we observe a large number of training observations from the training
environments and a small number of test observations (including the outcome) from the test environment.

\begin{namedsetting}{Few-shot}
Assume Setting~\ref{setting:s1} and that we are given $n\in\mathbb{N}$ training observations $D^{\tr} \coloneqq 
(Y^{\tr}_i, X^{\tr}_i, T^{\tr}_i, \pi^{\tr}_i, e^{\tr}_i)_{i=1}^n$ 
from the observed environments $e^{\tr}_i \in \calEtr$ and $m\in\mathbb{N}$ test observations $D^{\tst} \coloneqq (Y^{\tst}_i, X^{\tst}_i, T^{\tst}_i, \pi^{\tst}_i)_{i=1}^m$ from a test 
environment $e^{\tst} \in \calE$ and assume that $m \ll n$. 
\end{namedsetting}

The goal of few-shot policy 
generalization is to find a policy $\pi \in \Pi$ that maximizes the expected outcome in the test environment $e^{\tst}$
by exploiting the common information shared between the training and test environments. We consider using Assumption~\ref{assm:2} as the commonalities shared between the environments. In what follows, we propose a constrained optimization approach to learn a policy that aims to maximize the expected 
outcome in the test environment while exploiting the e-invariance condition.

An optimal policy $\pi_{\tst}^*$ in the test environment $e^{\tst}$ distributes all its mass on treatments which maximize the CATE in the test environment -- conditioned on the covariates $X$. That is, an 
optimal policy $\pi_{\tst}^*$ satisfies for all $x\in\calX$ and $t \in\calT$ that
\begin{equation}
    \pi_{\tst}^*(t|x) > 0 \implies 
    t  \in \argmax_{t^\prime \in \calT} \tau_{e^{\tst}}(x, t^\prime).
\end{equation}
Therefore, learning an optimal policy $\pi_{\tst}^*$ can be reduced to learning the CATE function $\tau_{e^{\tst}}$ in 
the test environment. 

As mentioned in Section~\ref{sec:off-policy-est}, the problem of learning $\tau_{e^{\tst}}$ from observational data is a well-studied problem. Here, we abstract away from a specific method and assume that we are 
given a function class $\mathcal{H} \subseteq \{\tau\mid\tau:\calX \times \calT \rightarrow \R \}$ and a loss function $\ell: \calY 
\times \calX \times \calT \times \Pi \times \calH \rightarrow \R$ such that 
$\hat{\tau} \in \argmin_{\tau \in 
\calH} \sum_{i=1}^m \ell(Y^{\tst}_i, X^{\tst}_i, T^{\tst}_i, \pi^{\tst}_i, \tau)$ is a consistent estimator of 
$\tau_{e^{\tst}}$ as $m \rightarrow \infty$. 
Now, we propose to leverage Assumption~\ref{assm:2} when estimating $\tau_{e^{\tst}}$ in the test environment. In particular, by Assumption~\ref{assm:2} we have for all $S \in \mS^{\einv}_{\calEtr}$ and for any fixed $e \in \calEtr$ that
\begin{equation}
    \forall x \in \calX^S, \forall t \in \calT: \tau^S_e(x, t) = \EX^{e^{\tst}}[\tau_{e^{\tst}}(X,t) \mid X^S = x].
\end{equation}
Let $S \in \mS^{\einv}_{\calEtr}$ and for all $x \in \calX^S$, $t \in \calT$ and $\tau \in \calH$, we define $h^S(\tau, x,t) \coloneqq \EX^{e^{\tst}}[\tau(X,t) \mid X^S =x]$ and 
$\tau^S_{\tr}(x, t) \coloneqq \tau_{e}^S(x,t)$ (for an arbitrary $e \in \calEtr$).

We then consider the following constrained optimization
\begin{equation}\label{eq:inv_constraint_opt_policy}
    \begin{aligned}
    \hat{\tau}^S \in & \argmin_{\tau} \sum_{i=1}^m \ell(Y^{\tst}_i, X^{\tst}_i, T^{\tst}_i, \pi^{\tst}_i, \tau) \\
    &\text{s.t.}\ \tau \in \calH \text{ and } \tau^S_{\tr}(\cdot, \cdot) \equiv h^S(\tau, \cdot, \cdot).
    \end{aligned}
\end{equation}
If there are multiple $S \in \mS^{\einv}_{\calEtr}$ satisfying e-invariance, 
that is, $|\mS^{\einv}_{\calEtr}|>1$,
one may choose an optimal set $S^*$ as in \eqref{eq:opt_inv_policy_explicit}.

We now impose the following separability assumption on the CATE function $\tau_{e^{\tst}}$, which allows us to find a solution to the optimization problem \eqref{eq:inv_constraint_opt_policy}.

\begin{assumption}[Separability of CATEs]\label{assm:few-shot}
Let $e^{\tst} \in \calE$ be a test environment, $S \in \mS^{\einv}_{\calEtr}$  and $N\coloneqq\{1,\ldots,d\}\setminus 
S$.
 There exist function classes 
$
\calF \subseteq \{\calX^S \times \calT \rightarrow \R\}$ and
$\calG 
\subseteq \{\calX^N \times \calT \rightarrow \R\}$ and  a pair of functions $f \in \calF$ and $g \in \calG$ such that
\begin{equation}\label{eq:tau_decompose}
    \forall x \in \calX, \forall t \in\calT: \tau_{e^{\tst}}(x, t) = f(x^S,t) + g(x^N, t).
\end{equation}
\end{assumption}
Under Assumptions~\ref{assm:2}~and~\ref{assm:few-shot}, there exists $f \in \calF$ and $g \in \calG$ such that for all $x \in \calX^S$ and $t \in \calT$
\begin{align*}
    \tau^S_{\tr}(x, t) &= \EX^{e^{\tst}}[\tau_{e^{\tst}}(X,t)\mid X^S=x]\\&= \EX^{e^{\tst}}[f(X^S,t) + g(X^N, t) \mid X^S = x] \\
    &=f(x,t) + \EX^{e^{\tst}}[g(X^N, t) \mid X^S =x],
\intertext{which is equivalent to}
 & f(x,t) = \tau^S_{\tr}(x, t) - \EX^{e^{\tst}}[g(X^N, t) \mid X^S =x]. \numberthis \label{eq:few-shot_f}
\end{align*}
Combining \eqref{eq:few-shot_f} and \eqref{eq:tau_decompose}, we then have for all $x \in \calX$ and $t \in \calT$ that
\begin{equation}\label{eq:few-shot-constraint2}
    \tau_{e^{\tst}}(x, t) = \tau^S_{\tr}(x^S, t) - \EX^{e^{\tst}}[g(X^N, t) \mid X^S =x^S] + g(x^N, t).
\end{equation}

Instead of optimizing over the function class $\calH$, we now optimize over the function class $\calG$ by replacing $\tau$ in \eqref{eq:inv_constraint_opt_policy} with $\tau_g^S: (x, t) \mapsto \tau^S_{\tr}(x^S, t) - \EX^{e^{\tst}}[g(X^N, t) \mid X^S =x^S] + g(x^N, t)$. More specifically, we consider the unconstrained optimization
\begin{equation}\label{eq:unconstrained_opt}
    \hat{g} \in \argmin_{g \in \calG} \sum_{i=1}^m \ell(Y^{\tst}_i, X^{\tst}_i, T^{\tst}_i, \pi^{\tst}_i, \tau_g^S).
\end{equation}
Then, $\tau^S_{\hat{g}}$ is a solution to the constraint optimization \eqref{eq:inv_constraint_opt_policy}.

In practice, we estimate the conditional expectation $\EX^{e^{\tst}}[g(X^N, t) \mid X^S = \cdot]$ by an estimator $\hat{q}_{g,t}$.
Intuitively, if the function class $\calG$ (see Assumption~\ref{assm:few-shot}) has a lower complexity compared to $\calH$, and $\hat{q}_{g,t}$ has good finite-sample properties,
one may expect an improvement (e.g., $\tau^S_{\hat{g}}$ has a lower  variance)
using this approach over an estimator that does not take into account the training sample. Without additional assumptions on $\calG$, the optimization problem \eqref{eq:unconstrained_opt} requires the computation of $\hat{q}_{g,t}$ at each iteration (since $\hat{q}_{g,t}$ depends on $g$). In Appendix~\ref{app:fewshotlinear}, we present an example to demonstrate that the optimization 
can simplify when imposing an additional assumption, such as linearity.

\section{Experiments}\label{sec:experiments}
This section presents the empirical experiments conducted on both simulated and real-world datasets. Firstly, we demonstrate through simulations that the testing methods introduced in Section~\ref{sec:infer_inv_set} provide level guarantees that hold empirically in finite samples. Secondly, we demonstrate the effectiveness of our e-invariance approach in a 
semi-real-world case study of mobile health interventions, where it outperforms the baselines in terms of generalization to a new environment.

\begin{figure}[!t]
\centering
\includegraphics[width=0.95\textwidth]{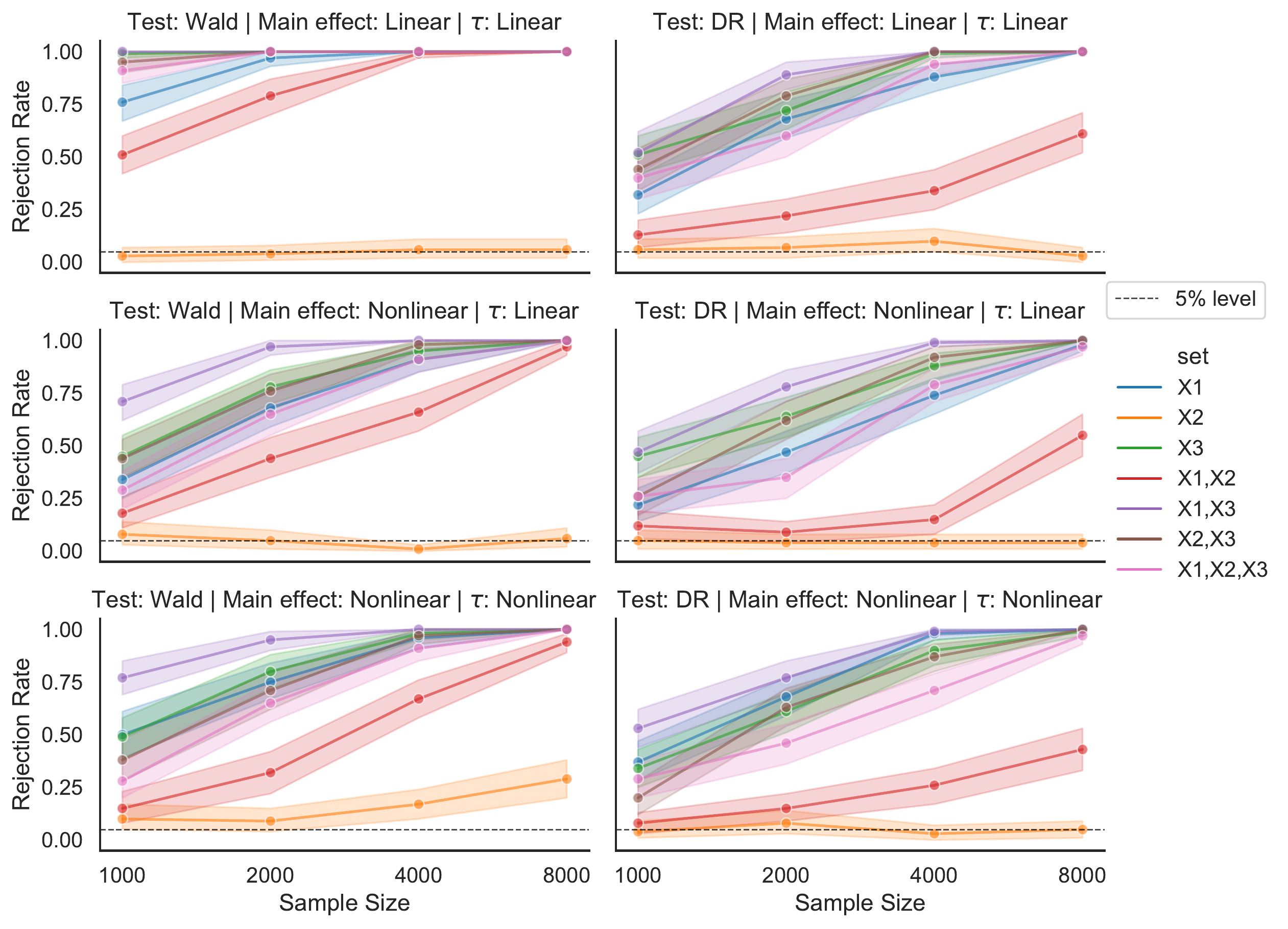}
\caption{
Rejection rates (at the 5\% significance level) of the proposed e-invariance tests from Section~\ref{sec:infer_inv_set} for varying
sample sizes. (Top) the main effect and treatment effect -- see \eqref{eq:linear_response_model} -- are linear, (Middle) the main effect is nonlinear while the treatment effect is linear, (Bottom) both the main effect and treatment effect are nonlinear. In all settings, the DR-learner test achieves the correct level, i.e., the e-invariant set $\{X^2\}$ has a 5\% rejection rate. Similarly, the Wald test correctly rejects the set $\{X^2\}$ at 5\% level in all scenarios except in the bottom scenario due to the violation of Assumption~\ref{assm:linear_effect}. As sample size increases, all other sets are rejected with increasing empirical probability.
}
\label{fig:ex2}
\end{figure}

\subsection{Testing for e-invariance (simulated data)}
We now conduct simulated experiments to validate the e-invariance tests proposed in Section~\ref{sec:infer_inv_set}. We generate datasets of size $n \in \{1000, 2000, 4000, 8000\}$ according to the SCM in Example~\ref{ex:example1} with two training environments $\calEtr = \{0,1\}$.
Each of the noise variables $(\epsilon_{U^1}, \epsilon_{U^2}, \epsilon_{X^1}, \epsilon_{X^2}, \epsilon_{X^3}, \epsilon_T, \epsilon_Y)$ is independently drawn from a standard Gaussian distribution. The environment-specific parameters $(\gamma^1_e, \gamma^2_e, \gamma^3_e, \mu_e)$ are drawn independently from a uniform distribution on $[-3, 3]$. As for the initial 
policy, we consider a policy that depends on the full covariate set $\{X_1, X_2, X_3\}$. More precisely, for all $x \in \calX$, the initial policy $\pi^{\tr}$ selects a treatment according to $\pi^{\tr}(T = 1 \mid X = x) = 1/(1 + e^{-(0.5 + x^1 - 0.5 x^2 + 0.3 x^3)})$. Moreover, we explore a scenario where the assumption of linear main effects in Equation \eqref{eq:linear_response_model} is violated. Specifically, we modify the structural assignment of $Y$ in Example~\ref{ex:example1} as $Y \coloneqq \mu_e + U + X^2 - 0.5 X^2 X^3  + X^3 + T (1 + 0.5X^2 + 0.5 U) + \epsilon_Y$. Lastly, we also consider a setting where the treatment effect itself is nonlinear. In this case, the structural assignment for $Y$ is defined as $Y \coloneqq \mu_e + U + X^2 - 0.5 X^2 X^3  + X^3 + T (1 + 0.5(X^2)^2 + 0.5(X^2)^3 + 0.5 U) + \epsilon_Y$.

We then conduct the Wald and DR-learner e-invariance tests (Wald test and DR test for short, respectively) for all candidate subsets according to Algorithm~\ref{algo:wald_test} and Algorithm~\ref{algo:dr_test}, where we assume that the initial policy $\pi^{\tr}$ is given. For the DR test, we estimate the conditional mean function ($\bar{\mu}^e$) with a random forest 
\citep{breiman2001random} and use the weighted generalised covariance measure \citep{Shah2018, scheidegger2022weighted} as the final test $\phi$ in Algorithm~\ref{algo:dr_test}.

Figure~\ref{fig:ex2} reports the rejection rates at the 5\% significance level for each candidate set under various settings. Recall that in Example~\ref{ex:example1},  $\{X^2\}$ is the only e-invariant set.
The results indicate that, for finite sample sizes, 
both of the proposed methods hold the correct level at 5\% in all settings (the rejection rates for the e-invariant set $\{X^2\}$ are approximately 5\% in all settings) except in the bottom left setting: here, the linear CATEs assumption (Assumption~\ref{assm:linear_effect})
is violated and
the Wald test fails to maintain the correct level.
When the linear main effect and treatment effect assumptions in \eqref{eq:linear_response_model} are specified correctly (top row), the Wald test shows superior performance compared to the DR test (that is, the Wald test rejects the non-e-invariant sets more often). When the linear main effect assumption is violated (middle row), the Wald test remains valid but the power of the test drops significantly. The Wald test, nonetheless, slightly performs better than the DR test in terms of test power in this setting.

\subsection{A case study using HeartSteps V1 dataset}

We apply our proposed approach to the study of a mobile health intervention for promoting physical activity called HeartSteps V1 \citep{klasnja2019efficacy}. HeartSteps V1 was a 42-day micro-randomized trial with 37 adults that aimed to optimize the effectiveness of two intervention components for promoting physical activity. One of the interventions was contextual-aware activity suggestions, delivered as push notifications, which aimed to encourage short bouts of walking throughout the day. Each participant was equipped with a wearable tracker that linked to the mobile application, which gathered sensor data and contextual information about the user. This information was used to tailor the content of activity suggestions that users received and to determine whether the user was available to receive an activity suggestion (e.g., if the sensor data indicated that the user was currently walking, they would not be sent a suggestion). The application randomized the delivery of activity suggestion up to five times a day at user-selected times spaced approximately 2.5 hours apart. If the contextual information indicated that the person was unavailable for the intervention, no suggestion was sent.

In this paper, we consider users as environments. We filter out users that had zero interactions with the application, resulting in a total of 27 users. For each user $u \in \{1,\dots,27\}$, we have the user's trajectory $(X_{u, 1}, T_{u, 1}, Y_{u, 1}),\dots,(X_{u, \ell_u}, T_{u, \ell_u}, Y_{u, \ell_u})$ of size $\ell_u$ (on average $\ell_u$ is 160), where the covariates $X_{u,i}$ 
are the contextual information about the user at time step $i$, the treatment $T_{u,i} \in \{0,1\}$ is whether to deliver an activity suggestion, and the outcome $Y_{u,i}$ is the log transformation of the 30-minute step count after the decision time. In this analysis, 
we make Assumption~\ref{assm:linear_effect} and consider the following approximation for the conditional mean of $Y_{u,i}$:
\begin{equation}\label{eq:reward_HSV1}
    \alpha_{u}^\top g(X_{u,i}) + \beta_{u}^\top f(X_{u,i}) T_{u,i},
\end{equation}
where $g(X_{u,i})$ is a (known)
baseline feature vector and $f(X_{u,i})$ is a
(known)
feature vector for the treatment effect. 
We allow the main effect to be misspecified.
As for the feature vectors,
we consider the same features (with minor modifications\footnote{We replace the dosage variable with the bucketized decision time variable to account for potential non-linear time dependency.}) as in \cite{liao2020personalized}; the vector $f(X_{u,i})$ contains Decision Bucket (DB) (bucketized decision time), Application Engagement (AE) (indicating how frequently users interact with the application), Location (LC) (indicating whether users are at home, at work or somewhere else)  and Variation Indicator (VI) (the variation level of step count 60 minutes around the current time slot in past 7 days). The baseline vector $g(X_{u,i})$ contains $f(X_{u,i})$ along with the prior 30-minute step count, the previous day's total step count and the current temperature. 

Since, for a given user, the outcome model \eqref{eq:reward_HSV1} does not change over time, we can combine all users' trajectories and obtain the combined observations under multiple environments; that is, we have the dataset $D \coloneqq (X_i, T_i, Y_i, e_i)_{i = 1}^n$, with $n = \sum_{u \in \{1,\dots,27\}} \ell_u$, collected from multiple environments (users) $\calE$, where $e_i \in \calE$ for all $i \in \{1,\dots,n\}$. In particular,
we do not account for potential temporal dependencies that are not captured by the bucketized decision times.
In practice, one may allow for dependence across time in the observations $(X_{u, i}, T_{u, i}, Y_{u, i})_{i=1}^{\ell_u}$ within each user, which we leave for future work, see Section~\ref{sec:conclfuturework}.

\begin{figure}[t]
\centering
\begin{minipage}{0.45\textwidth}
\includegraphics[width=1.0\textwidth]{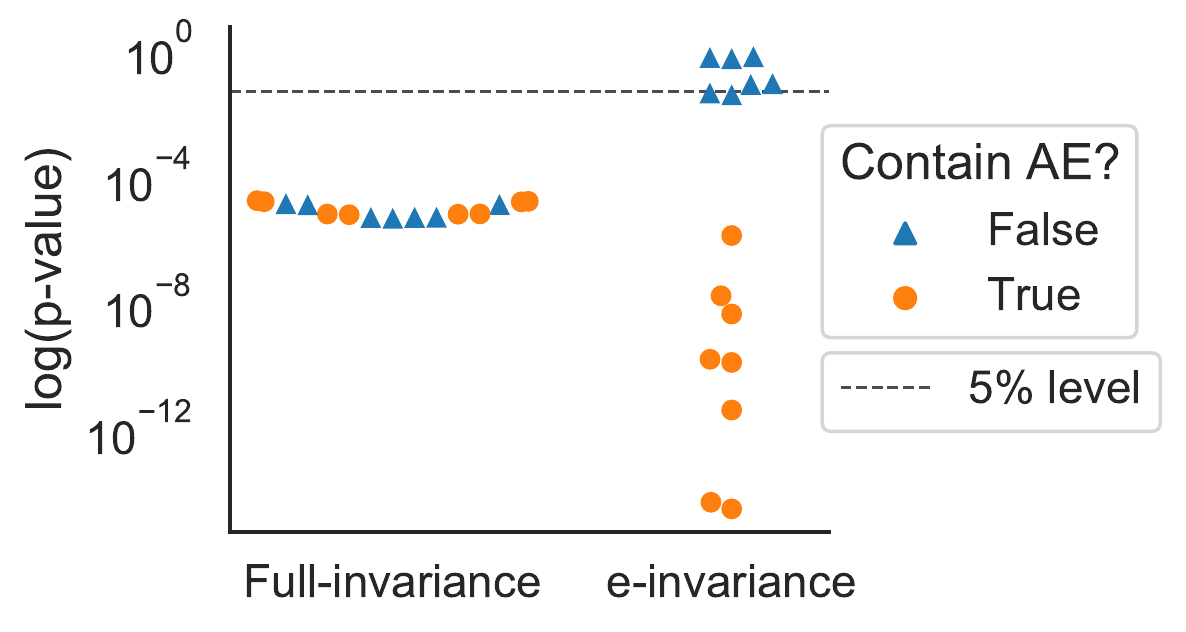}
\end{minipage}
\hspace{20pt}
\begin{minipage}{0.45\textwidth}
\begin{tabular}{cc}
\toprule
\input{figures/invariant_sets}
\end{tabular}
\end{minipage}
\caption{(Left) P-values for all subsets, considering the full- and e-invariance hypotheses. (Right) all five subsets for which we do not reject the e-invariance hypothesis.} 
\label{fig:full_vs_e_test}
\end{figure}

\subsection{Inferring e-invariant sets (HeartSteps V1)}
We begin our analysis on the HeartSteps V1 data by conducting the Wald e-invariance test detailed in Algorithm~\ref{algo:wald_test} to find subsets of the treatment effect feature vector $f(X)$ that satisfy the e-invariant condition \eqref{eq:p-inv_cond1}. As a comparison, we also apply the invariance test proposed in \citet[Method II]{Peters2016jrssb}, which tests for a full-invariance instead of our proposed e-invariance (see Figure~\ref{fig:ex-intro}). 
Figure~\ref{fig:full_vs_e_test}(Left) reports the p-values of all subsets for the full-invariance and e-invariance tests. The p-values for the full-invariance are all below the 5\% level and hence there is no subset that satisfies the full-invariance hypothesis. 
However, we find
several subsets that satisfy the e-invariance condition (those with p-values of the e-invariance hypothesis greater than the 5\% level). Interestingly, all subsets that contain Application Engagement (AE) have p-values close to zero,
suggesting
that AE is a variable that renders the conditional treatment effect unstable between environments if included in the model. We report all the subsets for which we accept the e-invariance hypothesis at the 5\% significance level in Figure~\ref{fig:full_vs_e_test}(Right). 

The above
finding demonstrates that the relaxed notion of invariance that we propose 
can be
beneficial in practice. The full-invariance condition may be too strict in that there is no full-invariant set.  
But if our goal is to learn a generalizable policy, it 
may suffice to test for the weaker notion of e-invariance, which the following section investigates using semi-real data.

\subsection{Zero-shot generalization (augmented HeartSteps V1)}
As the HeartSteps V1 study has been completed, it is not possible to implement and test a proposed policy on a new subject.
In this section, we instead conduct a simulation study using HeartSteps V1 data to illustrate the use of e-invariance for zero-shot generalization, see Section~\ref{sec:zero-shot}. To evaluate the performance of a policy, we consider `leave-one-environment-out' cross validation. Specifically, we first choose $e \in \calE$ as a test environment (user) and split the dataset $D$ into the test set $D^{\tst} \coloneqq \{(X^{\tst}_i, T^{\tst}_i, Y^{\tst}_i, e^{\tst}_i)\}_{i=1}^{n_{\tst}}$ and the training set $D^{\tr} \coloneqq \{(X^{\tr}_i, T^{\tr}_i, Y^{\tr}_i, e^{\tr}_i)\}_{i=1}^{n_{\tr}}$, where $e^{\tst}_i = e$ and $e^{\tr}_i \in \calEtr \coloneqq \calE\setminus\{e\}$ for all $i$. We then conduct the training and testing procedure as follows.

\paragraph{Training phase:} Using the training data $D^{\tr}$, we find all 
sets that are not rejected by the
Wald e-invariance test detailed in Algorithm~\ref{algo:wald_test}. Using the inferred e-invariant sets, we then compute an estimate of $\pi^{\einv}$ as discussed in Section~\ref{sec:off-policy-est}, where we use the R-learner due to \citet{nie2021quasi} as the CATE estimator -- based on the implementation of the \texttt{econml} Python package \citep{econml}. 
As a baseline, we include an optimal policy which utilizes all variables in $f(X)$ (denoted as `full-set'). This baseline is computed by pooling all data from the training environments and fitting the R-learner CATE estimator on the complete covariate set. Additionally, we include a uniformly random policy denoted as `random' as another baseline for comparison.

To illustrate this procedure for $e=1$, i.e., consider the set of training users $\calEtr = \{2,\dots,27\}$. Using the observations from $\calEtr$, we apply Algorithm~\ref{algo:wald_test} to obtain the inferred e-invariant sets $\mS^{\einv}_{\calEtr} = \{\{\text{DB}\}, \{\text{DB}, \text{VI}\}, \{\text{VI}\}, \{\text{DB}, \text{LC}, \text{VI}\}\}$. For each $S \in \mS^{\einv}_{\calEtr}$, we then train a policy $\hat{\pi}^S$ as in \eqref{eq:hat_pi_S} using the R-learner as the CATE estimator and choose an optimal $\hat{S}^*$ as in \eqref{eq:opt_inv_est}. We then use $\pi^{\hat{S}^*}$ as the final estimate of $\pi^{\einv}$.

\paragraph{Testing phase:} 
To perform policy evaluation, we create a semi-real test environment. 
To do so, we follow \citet{liao2020personalized}.
Given a test dataset $D^{\tst}$, the value $V(\pi)$ of a policy $\pi \in \Pi$ is computed by the following procedure.
\begin{enumerate}
    \item[(1)] Fit a regression model \eqref{eq:reward_HSV1} on $D^{\tst}$ 
    \begin{equation}
        Y^{\tst}_i = \alpha_{\tst}^\top g(X^{\tst}_i) + \beta_{\tst}^\top f(X^{\tst}_i) T^{\tst}_{i} + \epsilon_i,
    \end{equation}
    and obtain pairs of covariates and residuals $\{(X^{\tst}_i, \hat{\epsilon}_i)\}_{i=1}^{n_{\tst}}$ and parameters $\hat{\alpha}_{\tst}$ and $\hat{\beta}_{\tst}$.
    \item[(2)] Generate more pairs to obtain a total of 1000 observations $\{(\bar{X}^{\tst}_i, \bar{\epsilon}_i)\}_{i=1}^{1000}$ by uniformly sampling with 
    replacement from the orignal pairs. 
    \item[(3)] For each $i$, the treatment $\bar{T}^{\tst}_i$ is selected based on the covariates $\bar{X}^{\tst}_i$ according to $\pi$. 
    \item[(4)] For each $i$, the reward $\bar{Y}^{\tst}_i$ is defined by
    \begin{equation}
        \bar{Y}^{\tst}_i = \hat{\alpha}_{\tst}^\top g(\bar{X}_i^{\tst}) + \hat{\beta}_{\tst}^\top f(\bar{X}_i^{\tst}) \bar{T}_i + \bar{\epsilon}_i,
    \end{equation}
    where the coefficients $\hat{\alpha}_{\tst}$ and $\hat{\beta}_{\tst}$ are obtained from the regression model fitted in step (1). The value is then given as the average reward: $\hat{V}(\pi) = \frac{1}{1000}\sum_{i=1}^{1000} \bar{Y}^{\tst}_i$.
\end{enumerate}
The performance of a policy $\pi$ is then computed as $\hat{V}(\pi) - \hat{V}(\pi_{0})$, where $\pi_0$ is the policy that always selects to not deliver a suggestion. This corresponds to an empirical version of the expected relative reward as in \eqref{eq:worst_case_obj}.

\begin{figure}[t]
\centering
\includegraphics[width=1.0\textwidth]{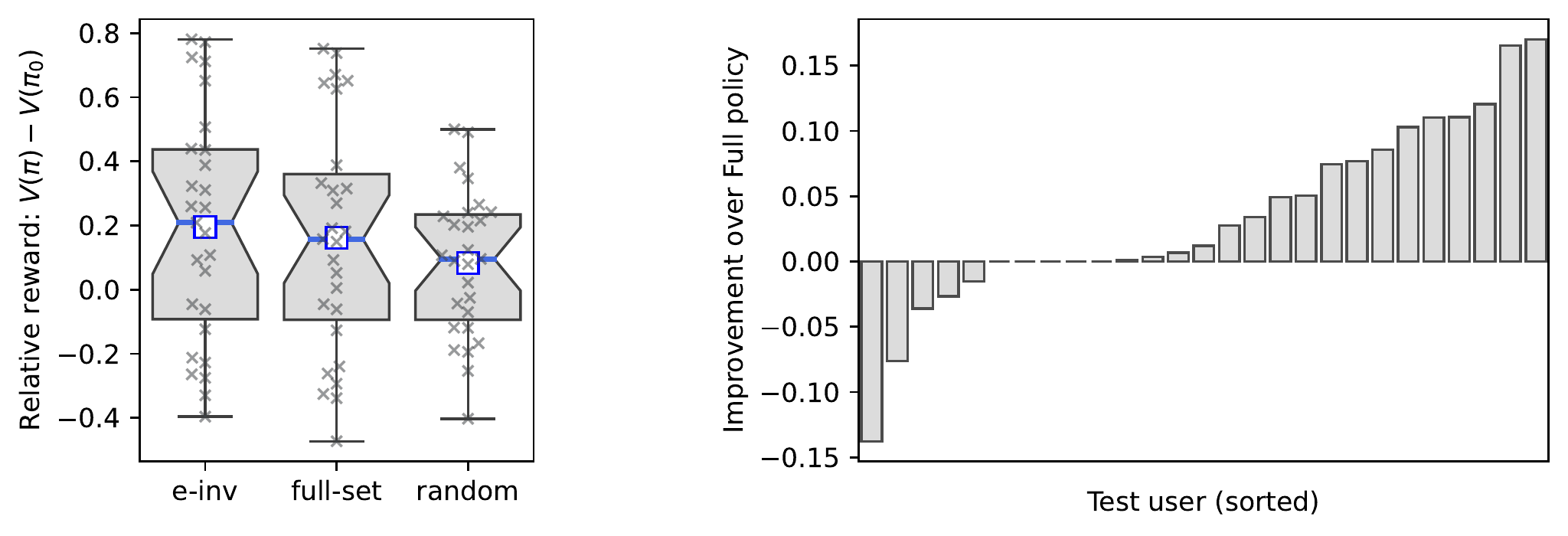}
\caption{(Left) The out-of-environment (here: out-of-user) performance of different policies including the proposed e-invariance policy (e-inv), an optimal policy that uses all variables in $f(X)$ (full-set), and a uniformly random policy (random). (Right) Comparing the out-of-environment performance between the e-inv policy and the full-set policy for each test user.
For the majority of test users, the e-inv policy outperforms the baseline (p-value 0.008).
}
\label{fig:zero-shot-simu}
\end{figure}

Figure~\ref{fig:zero-shot-simu}(Left) shows the performance of different 
policies trained on the data available during training. 
Our proposed approach (e-inv) 
shows 
a slight
improvement over the baseline approaches in terms of the mean and median performances over all users. 
Furthermore, as presented in Figure~\ref{fig:zero-shot-simu}(Right), the e-invariance policy $\pi^{\einv}$ yields higher relative reward comparing to the policy that uses all the variables in $f(X)$ in the majority of users (17 out of 27 users). 
We use the Wilcoxon signed-rank test \citep{wilcoxon_test} 
to compare the performance of the proposed e-inv policy with that of the full-set policy. It
shows a p-value of 0.008, indicating that the improvement is statistically significant.

\section{Conclusion and future work}\label{sec:conclfuturework}

This work addresses the challenge of adjusting for distribution shifts between environments in the context of policy learning. We propose an approach that leverages e-invariance, which is a relaxation of the full invariance assumption commonly used in causal inference literature. We show that despite being a weaker assumption, e-invariance is sufficient for building policies that generalize better to unseen environments compared to other policies. That is, under suitable assumptions, an optimal e-invariance policy is worst-case optimal. Additionally, we present a method for leveraging e-invariance information in the few-shot generalization setting, when a sample from the test environment is available.

To enable the practical use of e-invariance, we propose two testing procedures; one to test for e-invariance in linear and one in nonlinear model classes. Moreover, we validate the effectiveness of our policy learning methods through a semi-real-world case %
study in the domain of mobile health interventions. Our experiments show that an optimal policy based on an e-invariant set outperforms policies that rely on the complete context information when it comes to generalizing to new environments.

There are several promising directions for future research. 
It might be worthwhile to develop 
e-invariance testing procedures that can handle more complex temporal dependencies, especially when the data is collected from adaptive algorithms such as contextual bandit algorithms. Existing works have proposed inference methods to handle such scenarios \citep[e.g.,][]{zhang2021statistical, hadad2021confidence}, but how to incorporate these methods effectively into our framework remains an open question.

Another interesting area of future work is how best to use the e-invariant set $S^*$ (see \eqref{eq:opt_inv_policy_explicit}) in order to warm-start a contextual bandit algorithm. In the digital health field, one frequently conducts a series of optimization trials (each on a set of different users) in the process of optimizing a full digital health intervention. The data from each trial is used to inform the design of the subsequent trial.  In the case of HeartSteps, 3 trials (V1, V2 and V3) were conducted beginning with HeartSteps V1.  HeartSteps V2 \& V3 deployed a Bayesian Thompson-Sampling algorithm \citep{russo2018tutorial, liao2020personalized} which uses a prior distribution on the parameters to warm-start the algorithm.  Clearly the knowledge of an optimal e-invariant set $S^*$ should guide the formation of the prior. Determining the most effective approach to achieve this is still an open question.

Lastly, our work also contributes to the field of causal inference by introducing a relaxation of the full invariance assumption. We believe that there are other scenarios where the full invariance assumption is too restrictive, and a relaxation of the assumption may be sufficient to address the task at hand. 
Further investigating the potential for relaxation in different causal inference settings would be a promising future research direction.

\section*{Acknowledgments}
We thank Eura Shin for providing the code used to preprocess the HeartSteps V1 dataset. During part of this project SS and
JP were supported by a research grant (18968) from VILLUM FONDEN. NP is supported by a research grant (0069071) from Novo Nordisk Fonden. SM's research is supported by the National Institutes of Health grants P50DA054039 and P41EB028242. PK is supported by the National Institutes of Health grants R01HL125440, U01CA229445 and R01LM013107.

\bibliography{refs}

\appendix

\section{Proofs}

\subsection{Proof of Proposition~\ref{prop:p-inv_eqv}}\label{proof:prop:p-inv_eqv}

\begin{proof}
  We split the proof into three parts. First (Part 1), we show that the expected outcome function can be 
  decomposed into an effect-modification term that depends on the treatment and a main-effect term that does not 
  depend on the treatment. We then proceed and prove the `only if' part of the main result in Part 2 and the `if' part in Part 3. 
  
  \textit{Part 1:} We show the following lemma.
  \begin{lemma}\label{lemma:decompose}
   Assume Setting~\ref{setting:s1}. Let $S \subseteq \{1,\dots,d\}$ be an arbitrary subset and $t_0\in\calT$ be 
   the baseline treatment. Then, there exists a pair of functions  $\kappa_S: \calX^S \times \calT \times 
   \calE \rightarrow \R$ and $\nu_S: \calX^S \times \calE \rightarrow \R$ such that
    \begin{equation}\label{eq:lemma3}
        \forall e \in \calE, \forall x \in \calX^S,
        \forall t \in \calT: \EX^{e, \pi_t}[Y | X^S = x] = \ind(t \neq t_0)\kappa_S(x, t, e) +  \nu_S(x, e). 
    \end{equation}
  \end{lemma}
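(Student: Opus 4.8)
The statement is essentially a tautological decomposition, so the plan is to exhibit the two functions explicitly and then verify \eqref{eq:lemma3} by splitting into the cases $t = t_0$ and $t \neq t_0$. First I would set
\[
\nu_S(x, e) \coloneqq \EX^{e, \pi_{t_0}}[Y \mid X^S = x]
\]
to be the baseline conditional expectation under the policy $\pi_{t_0}$ that always plays $t_0$, and
\[
\kappa_S(x, t, e) \coloneqq \EX^{e, \pi_{t}}[Y \mid X^S = x] - \EX^{e, \pi_{t_0}}[Y \mid X^S = x],
\]
which is exactly the CATE $\tau^S_e(x, t)$ from \eqref{eq:cate_binary}. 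Both are well-defined maps on $\calX^S \times \calE$ and $\calX^S \times \calT \times \calE$ respectively, because Setting~\ref{setting:s1} guarantees a joint density $p^{e,\pi}$ \wrt the product measure $\nu$ and full support of $\P^e_X$ on $\calX$, so the conditional expectations $\EX^{e, \pi_t}[Y \mid X^S = x]$ exist for every $x \in \calX^S$, $t \in \calT$, and $e \in \calE$.

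With these definitions in hand, I would verify the identity directly. For $t = t_0$ the indicator $\ind(t \neq t_0)$ vanishes, so the right-hand side of \eqref{eq:lemma3} reduces to $\nu_S(x, e) = \EX^{e, \pi_{t_0}}[Y \mid X^S = x]$, which matches the left-hand side. For $t \neq t_0$ the indicator equals one and the right-hand side becomes $\kappa_S(x, t, e) + \nu_S(x, e)$, which telescopes, by the definition of $\kappa_S$, to $\EX^{e, \pi_t}[Y \mid X^S = x]$. This establishes the claimed decomposition for all $e \in \calE$, $x \in \calX^S$, and $t \in \calT$.

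I expect there to be no genuine obstacle: the argument is purely bookkeeping, and the only point deserving a sentence is the well-definedness of the conditional expectations, which follows from the density and full-support assumptions of Setting~\ref{setting:s1}. The real significance of the lemma is structural rather than technical: it isolates the treatment-dependent part $\kappa_S$, and identifying $\kappa_S \equiv \tau^S_e$ is precisely what the subsequent `only if' (Part~2) and `if' (Part~3) arguments for Proposition~\ref{prop:p-inv_eqv} will exploit, since e-invariance of $S$ is then equivalent to $\kappa_S$ being independent of $e$ (so that it can be renamed $\psi_S$ with $\psi_S(\cdot, t_0) \equiv 0$).
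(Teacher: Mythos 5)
Your proposal is correct and matches the paper's own proof essentially verbatim: the paper likewise sets $\nu_S(\cdot,e) \coloneqq \EX^{e,\pi_{t_0}}[Y\mid X^S=\cdot]$ and $\kappa_S(\cdot,t,e) \coloneqq \EX^{e,\pi_t}[Y\mid X^S=\cdot] - \nu_S(\cdot,e)$ (i.e., the CATE), and verifies the identity by the same telescoping/indicator observation. No gaps; your case split on $t = t_0$ versus $t \neq t_0$ is just a slightly more explicit write-up of the paper's one-line verification.
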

  \begin{proof}
Fix $e \in \calE$ and $t \in \calT$, and define
  $\delta_S(\cdot, t, e) \coloneqq 
  \EX^{e,\pi_t}[Y\mid X^S=\cdot]$ and $\nu_S(\cdot, e) \coloneqq \EX^{e,\pi_{t_0}}[Y\mid X^S=\cdot]$. It then 
  holds for all $x \in \calX^S$ that
  \begin{equation*}
    \EX^{e,\pi_t}[Y\mid X^S=x] = \ind(t \neq t_0)(\delta_S(x, t, e) - \nu_S(x, e)) + \nu_S(x, e).
  \end{equation*}
  We then define $\kappa_S(\cdot, t, e) \coloneqq \delta_S(\cdot, t, e) - \nu_S(\cdot, e)$, which concludes the 
  proof.
  \end{proof}

  \textit{Part 2:} Assume a subset $S \subseteq \{1,\dots,d\}$ is e-invariant \wrt $\calED$. Fix $e_0\in\calE$ as a reference environment. By 
  Lemma~\ref{lemma:decompose},
  there exists a pair of functions $\kappa_S: \calX^S \times \calT \times \calE \rightarrow \R$ and $\nu_S:
  \calX^S \times \calE \rightarrow \R$ such that for all $e\in\calED$, $x\in\calX^S$ and $t\in\calT$
  \begin{equation*}
    \tau_e^S(x,t)
    =(\ind(t \neq t_0)\kappa_S(x, t, e) + \nu_S(x, e)) - \nu_S(x, e)
    =\ind(t \neq t_0)\kappa_S(x, t, e).
  \end{equation*}
  Next, we define the function 
  $\tilde{\psi}_S: \calX^S \times \calT \rightarrow \R$ for all
  $x\in\calX^S$ and $t\in\calT$ by
  \begin{equation*}
    \tilde{\psi}_S(x,t)\coloneqq \kappa_S(x, t, e_0).
  \end{equation*}
  Now, since $S$ is e-invariant \wrt $\calED$ it
  holds for all $\forall e \in \calED$, $x\in\calX^S$ and
  $t\in\calT$ that
  \begin{equation}\label{eq:proof_prop1_inv}
    \ind(t \neq t_0)\tilde{\psi}_S(x,t) = \ind(t \neq t_0)\kappa_S(x, t, e).
  \end{equation}
  Then, combining \eqref{eq:lemma3} and \eqref{eq:proof_prop1_inv} implies that \eqref{eq:p-inv_cond2} is true.

  \textit{Part 3:} Assume \eqref{eq:p-inv_cond2} holds for a subset $S \subseteq \{1,\dots,d\}$. It then holds for all
  $e,h\in\calED$, $x\in\calX^S$ and
  $t\in\calT$ that
  \begin{align*}
    &\tau^S_e(x,t)-\tau^S_h(x,t)\\
    &\quad=(\psi_S(x,t) + \nu_S(x,e) - \psi_S(x,t_0) - \nu_S(x,e))-
      (\psi_S(x,t) + \nu_S(x,h) - \psi_S(x,t_0)- \nu_S(x,h))\\
    &\quad=0,
  \end{align*}
  which proves that $S$ is e-invariant \wrt
  $\calED$.
\end{proof}

\subsection{Proof of Proposition~\ref{prop:suffforS}}\label{proof:prop:suffforS}
From the SCM \eqref{eq:setting2-scm}, we have for all $x \in \calX^{\PA_{f,X}}$  that
\begin{align*}
    \EX^{e, \pi_t}[Y \mid X^{\PA_{f,X}}=x] &= \EX^{e}[f(X^{\PA_{f,X}}, U^{\PA_{f,U}}, t) \mid X^{\PA_{f,X}}=x] + \EX^{e}[g_e(X, U, \epsilon_Y) \mid X^{\PA_{f,X}}=x]
\intertext{Using the assumption (i) that $U^{\PA_{f,U}} \ci X^{\PA_{f,X}}$ in $\P^{e}_{X,U}$ for all $e \in \calED$, we have}
    \EX^{e, \pi_t}[Y \mid X^{\PA_{f,X}}=x] &=\EX^{e}[f(x, U^{\PA_{f,U}}, t)] + \EX^{e}[g_e(X, U, \epsilon_Y) \mid X^{\PA_{f,X}}=x], \numberthis \label{eq:proof_prop6_1}
\end{align*}
where a formal proof for the equality \eqref{eq:proof_prop6_1} is given, for example, in \cite[Example~4.1.7]{durrett2019probability}.
Next, using the assumption (ii) that $\P^e_{U^{\PA_{f,U}}}$ are identical across $e \in \calED$, we can drop the dependency on $e$ from the component $\EX^{e}[f(x, U^{\PA_{f,U}}, t)]$ in \eqref{eq:proof_prop6_1} and have that
\begin{equation*}
    \EX^{e, \pi_t}[Y \mid X^{\PA_{f,X}}=x] =\EX[f(x, U^{\PA_{f,U}}, t)] + \EX^{e}[g_e(X, U, \epsilon_Y) \mid X^{\PA_{f,X}}=x].
\end{equation*}
Thus, by Proposition~\ref{prop:p-inv_eqv}, $\PA_{f,X}$ is e-invariant \wrt $\calED$.

\subsection{Proof of Proposition~\ref{prop:gen_policy_ident}}\label{proof:prop:gen_policy_ident}
\begin{proof}
We begin with the proof of the first statement (i) of Proposition~\ref{prop:gen_policy_ident}. First, we show that the collections of policies $(\Pi^S_{\opt})_{S \in \mS^{\einv}_{\calEtr}}$ are
identifiable from $Q^{\tr}_i$ (for an arbitrary $1\leq i \leq n$).
Fix (an arbitrary) $f \in \calEtr$ and $S \in 
\mS^{\einv}_{\calEtr}$. Let $\pi^S \in \Pi^S_{\opt}$. Then, $\pi^S$ satisfies
\begin{equation}
    \pi^S(t | x) > 0 \implies 
    t  \in \argmax_{t^\prime \in \calT} \tfrac{1}{\abs{\calEtr}} \sum_{e \in \calEtr} \tau^S_e(x^S, t^\prime) = \argmax_{t^\prime \in \calT} \tau^S_f(x^S, t^\prime).
\end{equation}
Thus, the identifiability of $\Pi^S_{\opt}$ depends on the identifiability of $\tau^S_f$. 

Fix $i \in \{1,\dots,n\}$. 
Recall that in Setting~\ref{setting:s1} we assume $\forall x \in \calX, t \in \calT: \pi_i(t \mid x)
> 0$. 
It then holds for all $x \in \calX^S$ and $t \in \calT$ that 
\begin{align*}
    \EX^{e_i, \pi_t}[Y_i \mid X^S_i = x] 
    &= \EX^{e_i}[\EX^{e_i, \pi_t}[Y_i \mid X_i] \mid X^S_i = x] \\
    &\overset{(*)}{=}\EX^{e_i}[\EX^{e_i, \pi_i}\big[\tfrac{\ind(T_i=t)}{\pi_i(T_i \mid X_i)} Y_i \mid X_i \big] \mid X^S_i=x] \\
    &=\EX^{e_i, \pi_i}\big[\tfrac{\ind(T_i=t)}{\pi_i(t \mid X_i)} Y_i \mid X^S_i =x \big], \numberthis \label{eq:proof_propo3_2}
\end{align*}
where the equality $(*)$ holds by definition of $\pi_t$.
Since the right-hand side of \eqref{eq:proof_propo3_2} is the expectation \wrt $Q^{\tr}_i$, the quantity $\EX^{e_i, \pi_t}[Y_i \mid X^S_i = x]$ is identifiable from $Q^{\tr}_i$.

Next, we have
\begin{align*}
    \EX^{e_i, \pi_t}[Y_i \mid X^S_i = x] - \EX^{e_i, \pi_0}[Y_i \mid X^S_i = x] &= \EX^{e_i, \pi_t}[Y \mid X^S = x] - \EX^{e_i, \pi_0}[Y \mid X^S = x] \\
     &= \tau^S_{e_i}(x, t) \\
     &= \tau^S_{f}(x, t) \qquad \qquad \text{since $S \in \mS^{\einv}_{\calEtr}$.} \numberthis \label{eq:proof_propo3_4}
\end{align*}

From \eqref{eq:proof_propo3_4}, we then have that $\tau^S_f$ is identifiable from
$Q^{\tr}_i$ and therefore $\Pi^S_{\opt}$ is identifiable from $Q^{\tr}_i$. Consequently, the collection $$
A \coloneqq \argmax_{S \in \mS^{\einv}_{\calEtr}}\EX^{e^{\tst}} \left[ \textstyle\sum_{t\in\calT}\tau^S_{e}(X^S,t) \pi^S(t \mid X) \right]$$ is identifiable from $Q^{\tr}_i$ and $Q^{\tst}$.

Next, we show the proof of the second statement (ii) of Proposition~\ref{prop:gen_policy_ident}.
Fix $e^{\tst} \in \calE$, (an arbitrary) $f \in \calEtr$ and let $S^* \subseteq \{1,\dots,d\}$ be 
 a subset that satisfies
\begin{equation*}
    S^* \in \argmax_{S \in \mS^{\einv}_{\calEtr}}\EX^{e^{\tst}}[\sum_{t\in\calT}\tau^S_{f}(X^S,t) \pi^S(t \mid X^S)].
\end{equation*}
Next, we recall the defnition $\Pi^{\einv}_{\opt} = \{\pi \in \Pi \mid \exists S \in \mS^{\einv}_{\calEtr}\ 
\text{s.t.\ } \pi \in \Pi^S_{\opt}\}$ and let $\pi^{\diamond}_{e^{\tst}} \in \argmax_{\pi \in \Pi^{\einv}_{\opt}} \EX^{e^{\tst}, \pi}[Y]$. Then, using that $\pi^{\diamond}_{e^{\tst}} \in\Pi^{\einv}_{\opt}$, choose $S^{\diamond} \in \mS^{\einv}_{\calEtr}$ such that
$\pi^{\diamond}_{e^{\tst}}\in\Pi^{S^{\diamond}}_{\opt}$.
We have
\begin{align*}
    \EX^{e^{\tst}, \pi^{\diamond}_{e^{\tst}}}[Y] - \EX^{e^{\tst}, \pi_{t_0}}[Y] &=  \EX^{e^{\tst}}[\sum_{t\in\calT}(\EX^{e^{\tst},\pi_t}[Y \mid X^{S^{\diamond}}] - \EX^{e^{\tst},\pi_{t_0}}[Y \mid X^{S^{\diamond}}])\pi^{\diamond}_{e^{\tst}}(t \mid X^{S^{\diamond}})] \\
    &= \EX^{e^{\tst}}[\sum_{t\in\calT}\tau^{S^\diamond}_{e^{\tst}}(X^{S^\diamond},t) \pi^{\diamond}_{e^{\tst}}(t \mid X^{S^\diamond})] \\
    &= 
    \EX^{e^{\tst}}[\sum_{t\in\calT}\tau^{S^\diamond}_{f}(X^{S^\diamond},t)  \pi^{\diamond}_{e^{\tst}}(t \mid X^{S^\diamond})] \qquad \qquad \text{by Assumption~\ref{assm:2}}\\
    &= 
    \EX^{e^{\tst}}[\sum_{t\in\calT}\tau^{S^\diamond}_{f}(X^{S^\diamond},t) \pi^{S^\diamond}(t \mid X^{S^\diamond})] \qquad \qquad \text{since $\pi^{\diamond}_{e^{\tst}} \in \Pi^{S^{\diamond}}_{\opt}$} \\
    &\leq \max_{S \in \mS^{\einv}_{\calEtr}} \EX^{e^{\tst}}[\sum_{t\in\calT}\tau^S_{f}(X^S,t) \pi^S(t \mid X^S)] 
    \\
    &= \EX^{e^{\tst}}[\sum_{t\in\calT}\tau^{S^*}_{f}(X^{S^*},t) \pi^{S^*}(t \mid X^{S^*})] \\
    &= \EX^{e^{\tst}}[\sum_{t\in\calT}\tau^{S^*}_{e^{\tst}}(X^{S^*},t) \pi^{S^*}(t \mid X^{S^*})] \qquad \quad \text{by Assumption~\ref{assm:2}} \\
    &= \EX^{e^{\tst}}[\sum_{t\in\calT}(\EX^{e^{\tst},\pi_t}[Y \mid X^{S^*}] - \EX^{e^{\tst},\pi_{t_0}}[Y \mid X^{S^*}])\pi^{S^*}(t \mid X^{S^*})] \\
    &= \EX^{e^{\tst}, \pi^{S^*}}[Y] - \EX^{e^{\tst}, \pi_{t_0}}[Y].
\end{align*}
We therefore have that
\begin{equation*}
    \EX^{e^{\tst}, \pi^{\diamond}_{e^{\tst}}}[Y] \leq \EX^{e^{\tst}, \pi^{S^*}}[Y],
\end{equation*}
which concludes the proof of the second statement (ii) of Proposition~\ref{prop:gen_policy_ident}.

\end{proof}

\subsection{Proof of Theorem~\ref{thm:inv_policy}}\label{proof:thm:inv_policy}
\begin{proof}
Let $e^{\tst} \in \calE$ be a test environment, 
$\pi^{\einv}$ be a policy 
satisfying \eqref{eq:opt_inv_policy} and, for all $S \subseteq \{1,\dots,d\}$, $\Pi^S_{\opt}$ 
be the set policies satisfying \eqref{eq:pi_S}. 

We now prove the first statement, see Theorem~\ref{thm:inv_policy_1}. By definition, there exists $S^* \in \mS^{\einv}_{\calEtr}$ such that $\pi^{\einv} \in \Pi^{S^*}_{\opt}$.  It then holds that
\begin{align*}
    \EX^{e^{\tst}, \pi^{\einv}}[Y] - \EX^{e^{\tst}, \pi_{t_0}}[Y] &= \EX^{e^{\tst}}[\sum_{t\in\calT}(\EX^{e^{\tst},\pi_t}[Y \mid X^{S^*}] - \EX^{e^{\tst},\pi_{t_0}}[Y \mid X^{S^*}])\pi^{\einv}(t \mid X^{S^*})] \\
    &= \EX^{e^{\tst}}[\sum_{t\in\calT}\tau^{S^*}_{e^{\tst}}(X^{S^*}, t)\pi^{\einv}(t \mid X^{S^*})].
\end{align*}
Fix $e^{\tr} \in \calEtr$. We have
\begin{align*}
    \EX^{e^{\tst}, \pi^{\einv}}[Y] - \EX^{e^{\tst}, \pi_{t_0}}[Y] &= \EX^{e^{\tst}}[\sum_{t\in\calT}\tau^{S^*}_{e^{\tst}}(X^{S^*}, t)\pi^{\einv}(t \mid X^{S^*})] \\
    &= \EX^{e^{\tst}}[\sum_{t\in\calT} \tau^{S^*}_{e^{\tr}}(x^{S^*}, t) \pi^{\einv}(t \mid X^{S^*})] && \text{by Assumption~\ref{assm:2}} \\
    &= \EX^{e^{\tst}}[\max_{t\in\calT} \tau^{S^*}_{e^{\tr}}(x^{S^*}, t)] && \text{by the definition of $\Pi^{S^*}_{\opt}$} \\
    &= \EX^{e^{\tst}}[\max_{t\in\calT}\tau^{S^*}_{e^{\tst}}(X^{S^*}, t)] && \text{by Assumption~\ref{assm:2}} \\
    &\geq \max_{t \in \calT}(\EX^{e^{\tst}}[\tau^{S^*}_{e^{\tst}}(X^{S^*}, t)]) \\
    &= \max_{t \in \calT}(\EX^{e^{\tst}, \pi_t}[Y] - \EX^{e^{\tst},\pi_{t_0}}[Y]) && \text{by the tower property}
\end{align*}
This implies, 
\begin{equation}
    \EX^{e^{\tst}, \pi^{\einv}}[Y] \geq \max_{t \in \calT}\EX^{e^{\tst}, \pi_t}[Y],
\end{equation}
which concludes the proof of Theorem~\ref{thm:inv_policy_1}.
Next, we prove the second statement, see Theorem~\ref{thm:inv_policy_2}. Recall that $[e^{\tst}] \coloneqq \{ e\in\calE \mid \P^e_X = Q^{\tst}_X \}$.
From Assumption~\ref{assm:3}, there exists an environment $f \in [e^{\tst}]$ and $\SD \in \mS^{\einv}_{\calE}$ such that 
\begin{equation} \label{eq:proof_thm1_1}
   \forall x \in \calX: \max_{t \in \calT} \tau_f(x, t) = \max_{t \in \calT} \tau^{\SD}_f(x^{\SD}, t).
\end{equation}

We have for all $S \in \mS^{\einv}_{\calEtr}$ that 
\begin{align*} 
    \EX^f[\max_{t \in \calT} \tau^{\SD}_f(X^{\SD}, t)] &= \EX^f[\max_{t \in \calT} \tau_f(X, t)] && \text{from \eqref{eq:proof_thm1_1}} \\
    &= \EX^f[\EX^f[\max_{t \in \calT} \tau_f(X ,t)\mid X^S]]  \\
    &\geq \EX^f[\max_{t \in \calT}\EX^f[\tau_f(X ,t) \mid X^S]] \\
    &= \EX^f[\max_{t \in \calT}\tau^S_f(X^S ,t)]. \numberthis \label{eq:eq:proof_thm1_1-1}
\end{align*}

Now, we have for all $e \in [e^{\tst}]$ and for all $S \in \mS^{\einv}_{\calEtr}$
\begin{align*}
    \EX^e[\max_{t \in \calT} \tau^{\SD}_e(X^{\SD}, t)] &= \EX^f[\max_{t \in \calT} \tau^{\SD}_f(X^{\SD}, t)] && \text{by Assumption~\ref{assm:2} and $f \in [e^{\tst}]$} \\
    &\geq \EX^f[\max_{t \in \calT}\tau^S_f(X^S ,t)] && \text{from \eqref{eq:eq:proof_thm1_1-1}} \\
    &= \EX^e[\max_{t \in \calT}\tau^S_e(X^S ,t)] && \text{by Assumption~\ref{assm:2} and $f \in [e^{\tst}]$.} \numberthis \label{eq:proof_thm1_2}
\end{align*}

Next, we recall the definition $\Pi^{\einv}_{\opt} = \{\pi \in \Pi \mid \exists S \in \mS^{\einv}_{\calEtr}\ 
\text{s.t.\ } \pi \in \Pi^S_{\opt}\}$ and that $\pi^{\einv} \in \argmax_{\pi\in \Pi^{\einv}_{\opt}} \EX^{e^{\tst}, \pi}[Y]$. Then there exists $S^* \in \mS^{\einv}_{\calEtr}$ such that $\pi^{\einv} \in \Pi^{S^*}_{\opt}$. We therefore have for all $e \in [e^{\tst}]$ that
\begin{align*}
    \EX^{e, \pi^{\einv}}[Y] - \EX^{e, \pi_{t_0}}[Y] &= \EX^{e}[\sum_{t\in\calT}(\EX^{e,\pi_t}[Y \mid X^{S^*}] - \EX^{e,\pi_{t_0}}[Y \mid X^{S^*}])\pi^{\einv}(t \mid X^{S^*})] \\
    &= \EX^{e}[\sum_{t\in\calT}\tau^{S^*}_{e}(X^{S^*}, t)\pi^{\einv}(t \mid X^{S^*})]. \numberthis \label{eq:proof_thm1_2-2}
\end{align*}
Fix $e^{\tr} \in \calEtr$, we have for all $e \in [e^{\tst}]$ 
\begin{align*}
    \EX^{e, \pi^{\einv}}[Y] - \EX^{e, \pi_{t_0}}[Y] &= \EX^{e}[\sum_{t\in\calT}\tau^{S^*}_{e^{\tr}}(X^{S^*}, t)\pi^{\einv}(t \mid X^{S^*})] \qquad \text{by \eqref{eq:proof_thm1_2-2} and Assumption~\ref{assm:2}} \\
    &= \EX^{e}[\max_{t \in \calT} \tau^{S^*}_{e^{\tr}}(X^{S^*}, t)] \qquad \qquad \text{by the definition of $\Pi^{S^*}_{\opt}$} \\
    &= \EX^{e}[\max_{t \in \calT} \tau^{S^*}_{e}(X^{S^*}, t)] \qquad \qquad \text{by Assumption~\ref{assm:2}.} \numberthis \label{eq:proof_thm1_3}
\end{align*}
Let $\pi^{S^{\diamond}} \in \Pi^{S^{\diamond}}_{\opt}$, we then have that
\begin{align*}
    \EX^{e^{\tst}}[\max_{t \in \calT} \tau^{S^*}_{e^{\tst}}(X^{S^*}, t)] 
    &= 
    \EX^{e^{\tst}, \pi^{\einv}}[Y] - \EX^{e^{\tst}, \pi_{t_0}}[Y] 
    && \text{from \eqref{eq:proof_thm1_3}} 
    \\
    &\geq \EX^{e^{\tst}, \pi^{S^{\diamond}}}[Y] - \EX^{e^{\tst}, \pi_{t_0}}[Y] \\
    &= \EX^{e^{\tst}}[\sum_{t\in\calT}(\EX^{e^{\tst},\pi_t}[Y \mid X^{S^{\diamond}}] - \EX^{e^{\tst},\pi_{t_0}}[Y \mid X^{S^{\diamond}}])\pi^{S^{\diamond}}(t \mid X^{S^{\diamond}})] \\
    &= \EX^{e^{\tst}}[\sum_{t\in\calT}\tau^{S^{\diamond}}_{e^{\tst}}(X^{S^{\diamond}}, t)\pi^{S^{\diamond}}(t \mid X^{S^{\diamond}})] \\
    &= \EX^{e^{\tst}}[\sum_{t\in\calT}\tau^{S^{\diamond}}_{e^{\tr}}(X^{S^{\diamond}}, t)\pi^{S^{\diamond}}(t \mid X^{S^{\diamond}})] \qquad \text{by Assumption~\ref{assm:2}} \\
    &= \EX^{e^{\tst}}[\max_{t \in \calT} \tau^{S^{\diamond}}_{e^{\tr}}(X^{S^{\diamond}}, t)] \qquad \qquad \text{by the definition of $\Pi^{S^{\diamond}}_{\opt}$} \\
    &= \EX^{e^{\tst}}[\max_{t \in \calT} \tau^{S^{\diamond}}_{e^{\tst}}(X^{S^{\diamond}}, t)], \qquad \qquad \text{by Assumption~\ref{assm:2}} \numberthis \label{eq:proof_thm1_3-2}
\end{align*}
where the above inequality holds because 
$$\pi^{\einv} \in \argmax_{\pi\in \Pi^{\einv}_{\opt}} \EX^{e^{\tst}, \pi}[Y] = \argmax_{\pi\in \Pi^{\einv}_{\opt}} \EX^{e^{\tst}, \pi}[Y] - \EX^{e^{\tst}, \pi_{t_0}}[Y].$$ 
Combining the two inequalities \eqref{eq:proof_thm1_3-2} and \eqref{eq:proof_thm1_2}, 
we then have that
\begin{equation}\label{eq:proof_thm1_3-3}
    \EX^{e^{\tst}}[\max_{t \in \calT} \tau^{S^*}_{e^{\tst}}(X^{S^*}, t)] = \EX^{e^{\tst}}[\max_{t \in \calT} \tau^{S^{\diamond}}_{e^{\tst}}(X^{S^{\diamond}}, t)]. 
\end{equation}

Then, from \eqref{eq:proof_thm1_3-3} and since $e^{\tst} \in [e^{\tst}]$ and $S^{\diamond}$ is e-invariant \wrt $\calEtr \cup [e^{\tst}]$ (by Assumption~\ref{assm:2}), we have for all $e \in [e^{\tst}]$
\begin{equation}\label{eq:proof_thm1_4}
    \EX^{e}[\max_{t \in \calT} \tau^{S^*}_{e}(X^{S^*}, t)] = \EX^{e}[\max_{t \in \calT} \tau^{S^{\diamond}}_{e}(X^{S^{\diamond}}, t)].
\end{equation}

We are now ready to prove the main statement of Theorem~\ref{thm:inv_policy_2}.
\begin{align*}
    V^{[e^{\tst}]}(\pi^{\einv}) &= \inf_{e \in [e^{\tst}]}(\EX^{e, \pi^{\einv}}[Y] - \EX^{e, \pi_{t_0}}[Y]) \\ 
    &= \inf_{e \in [e^{\tst}]} \EX^{e}[\max_{t\in\calT}\tau^{S^*}_e(X^{S^*}, t)] && \text{from \eqref{eq:proof_thm1_3}.}
\intertext{By the definition of $[e^{\tst}]$ and Assumption~\ref{assm:2}, we then have}
V^{[e^{\tst}]}(\pi^{\einv}) &= \EX^{f}[\max_{t\in\calT}\tau^{S^*}_{f}(X^{S^*}, t)] \\
&= \EX^{e^f}[\max_{t \in \calT} \tau^{S^{\diamond}}_{e^f}(X^{S^{\diamond}}, t)] && \text{from \eqref{eq:proof_thm1_4}.} \numberthis \label{eq:proof_thm1_5}
\end{align*}

Finally, we show that for all $\pi \in \Pi$, $V^{[e^{\tst}]}(\pi)$ is bounded above by $V^{[e^{\tst}]}(\pi^{\einv})$.
\begin{align*}
    V^{[e^{\tst}]}(\pi) &= \inf_{e \in [e^{\tst}]} (\EX^{e^{\tst}, \pi}[Y] - \EX^{e^{\tst}, \pi_{t_0}}[Y]) \\ &\leq \EX^{f, \pi}[Y] - \EX^{f, \pi_{t_0}}[Y] && \text{since $f \in [e^{\tst}]$}\\
    &= \EX^f[\sum_{t\in\calT}(\EX^{f,\pi_t}[Y \mid X] - \EX^{f,\pi_{t_0}}[Y \mid X])\pi(t \mid X)] \\
    &= \EX^f[\sum_{t\in\calT}\tau_f(X, t)\pi(t \mid X)] \\
    &\leq \EX^f[\max_{t \in \calT} \tau_f(X, t)] \\
    &= \EX^f[\max_{t \in \calT} \tau^{S^{\diamond}}_f(X^{\SD}, t)] && \text{from \eqref{eq:proof_thm1_1}} \\
    &= V^{[e^{\tst}]}(\pi^{\einv}) && \text{from \eqref{eq:proof_thm1_5}},
\end{align*}
which concludes the proof.
\end{proof}

\subsection{Proof of Proposition~\ref{prop:wald_test}}\label{proof:prop:wald_test}
\begin{proof}
Let $S \subseteq \{1,\dots,d\}$, $\alpha \in (0,1)$, $\tilde{X} \coloneqq \rvector{1 & X}^\top$ and $\tilde{X}^S \coloneqq \rvector{1 & X^S}^\top$ and $\hat{B}$ be the estimator 
solving the equation
$\sum_{i}^n G_i(\alpha, A, \beta,B) = 0$. Assume Setting~\ref{setting:s1} and assume the following regularity conditions (these are similar to the ones required by \citet{boruvka2018assessing} with the difference that we require them to hold for all $e \in \calEtr$).
\begin{assumption}[Regularity conditions]\label{assm:regularity_cond} For all $e \in \calEtr$ it holds that
\begin{itemize}
    \item[(i)] 
    $\EX^{e, \pi^{\tr}}[Y^4]<\infty$ and  $\max_{j\in\{1,\ldots,d+1\}}\EX^{e}[(\tilde{X}^j)^4]<\infty$,
    \item[(ii)] the matrices $\EX^e[\tilde{X}^S(\tilde{X}^S)^\top]$ 
    and 
    \begin{equation*}
        \EX^e\left[\sum_{t \in \calT} \tilde{\pi}(t |X^S)\begin{bmatrix}
    \tilde{X} \\
    u_{e}\tilde{X} \\
    v_{t} - \tilde{\pi}(1|X^S) \\
    (v_{t} - \tilde{\pi}(1|X^S))u_e
\end{bmatrix}\begin{bmatrix}
    \tilde{X} \\
    u_{e}\tilde{X} \\
    v_{t} - \tilde{\pi}(1|X^S) \\
    (v_{t} - \tilde{\pi}(1|X^S))u_e
\end{bmatrix}^\top\right]
    \end{equation*}
are invertible.
\end{itemize}
\end{assumption}
From Proposition~3.1 in \cite{boruvka2018assessing}, we have that $\sqrt{n}(\hat{B} - B) \indist \mathcal{N}(0, \VAR[B])$, where $\hat{\VAR}$ defined in \eqref{eq:covariance_estimator} is a consistent estimator of $\VAR[B]$. 
Next, from Theorem~8.3 in \cite{boos2013essential}, we have that $T_n \coloneqq n\hat{B}\hat{\VAR}\hat{B} \indist \chi^2_{\abs{\vec(B)}}$. Let  $q_{\alpha}$ be the $(1 - \alpha)$-quantile of $\chi^2_{\abs{\vec(B)}}$ and $\psi^{\text{Wd}}_n(D^{\tr}, S, \alpha) := \ind(T_n > q_{\alpha})$. We can then conclude that
\begin{equation*}
    \sup_{\P \in H^{\tr}_{0, S}}\limsup_{n \to \infty} \P(\psi^{\text{Wd}}_n(D^{\tr}, S, \alpha) = 1) \leq \alpha.
\end{equation*}

\end{proof}

\subsection{Proof of Proposition~\ref{prop:dr_test1}}\label{proof:prop:dr_test1}
\begin{proof}
For all $x \in \calX, t \in \calT, w \in \calT$ and $e \in \calE$, let $Z^e_w(x, t, y) \coloneqq \bar{\mu}^e(x, w) + \frac{\ind(t = w)(y - \bar{\mu}^e(x, w))}{\bar{\pi}(w | x)}$. We now show that for all $S \subseteq \{1,\dots,d\}$, $\EX^{e, \pi^{\tr}}[Z^e_w(X, T, Y) \mid X^S] = \EX^{e,\pi_w}[Y \mid X^S]$ if one of the models is correct.
\begin{enumerate}[label=(\roman*)]
    \item Assume that $\bar{\mu}^e$ is correct, i.e., $\bar{\mu}^e(x, t) = \EX^{e}[Y \mid X=x, T=t]$ for all $x \in \calX$, $t \in \calT$ and $e \in \calEtr$. We then have 
    \begin{align*}
    \EX^{e,\pi^{\tr}}\big[\frac{\ind(T = w)(Y - \bar{\mu}^e(X, w))}{\bar{\pi}(w \mid X)} \mid X^S\big] &= \EX^e\big[\EX^{e,\pi^{\tr}}\big[\frac{\ind(T = w)(Y - \bar{\mu}^e(X, w))}{\bar{\pi}(w | X)} \mid X \big] \mid X^S \big] \\
    &=\EX^e\big[\frac{\pi^{\tr}(w | X)}{\bar{\pi}(w | X)}\EX^{e}\big[(Y - \bar{\mu}^e(X, w)) \mid X, T = w \big] \mid X^S \big] \\
    &= 0. \numberthis \label{eq:proof_prop11_1}
    \end{align*}
    Next, we have
    \begin{align*}
        \EX^{e, \pi^{\tr}}\big[\bar{\mu}^e(X,w) \mid X^S\big] &= \EX^{e}\big[\EX^{e}\big[Y \mid X, T = w \big]\mid X^S\big] \\
        &= \EX^{e}\big[\EX^{e, \pi_w}\big[Y \mid X \big]\mid X^S\big] \\
        &= \EX^{e, \pi_w}\big[Y \mid X^S\big].
        \numberthis \label{eq:proof_prop6_2}
    \end{align*}
    Then, from \eqref{eq:proof_prop11_1} and \eqref{eq:proof_prop6_2}, we have that $\EX^{e, \pi^{\tr}}[Z^e_w(X, T, Y) \mid X^S] = \EX^{e,\pi_w}[Y \mid X^S]$ and it thus holds for all $x \in \calX^S$ that 
    \begin{equation*}
        \EX^{e,\pi^{\tr}}[O^e(X,T,Y) \mid X^S = x] = \EX^{e, \pi^{\tr}}[Z^e_1(X, T, Y) - Z^e_0(X, T, Y) \mid X^S = x]  = \tau^S_e(x, 1).
    \end{equation*}
    \item Assume that $\bar{\pi}$ is correct, i.e., $\bar{\pi}(t \mid x) = \pi^{\tr}(t \mid x)$ for all $x \in \calX$, $t \in \calT$. We then have
    \begin{align*}
    \EX^{e,\pi^{\tr}}\big[\frac{\ind(T = w)(Y - \bar{\mu}^e(X, w))}{\bar{\pi}(w \mid X)} \mid X^S\big] &= \EX^e\big[\EX^{e,\pi^{\tr}}\big[\frac{\ind(T = w)(Y - \bar{\mu}^e(X, w))}{\pi^{\tr}(w | X)} \mid X\big] \mid X^S \big] \\
    &= \EX^e\big[\EX^{e}\big[(Y - \bar{\mu}^e(X, w)) \mid X, T = w\big] \mid X^S \big] \\
    &= \EX^e\big[\EX^{e,\pi_w}\big[Y - \bar{\mu}^e(X, w) \mid X\big] \mid X^S \big] \\
    &= \EX^{e,\pi_w}\big[Y - \bar{\mu}^e(X, w) \mid X^S\big]. \numberthis \label{eq:proof_prop6_3}
    \end{align*}
Next, we have
\begin{align*}
    \EX^{e, \pi^{\tr}}\big[\bar{\mu}^e(X,w) \mid X^S\big] &= \EX^{e, \pi_w}\big[\bar{\mu}^e(X,w) \mid X^S\big], \numberthis \label{eq:proof_prop6_4}
\end{align*}
since the expectation is only over $X$ and does not depend on the treatment $T$. Then, from \eqref{eq:proof_prop6_3} and \eqref{eq:proof_prop6_4}, we have that $\EX^{e, \pi^{\tr}}[Z^e_w(X, T, Y) \mid X^S] = \EX^{e,\pi_w}[Y \mid X^S]$ and it thus holds for all $x \in \calX^S$ that 
\begin{equation*}
    \EX^{e,\pi^{\tr}}[O^e(X,T,Y) \mid X^S = x] = \EX^{e, \pi^{\tr}}[Z^e_1(X, T, Y) - Z^e_0(X, T, Y) \mid X^S = x]  = \tau^S_e(x, 1).
\end{equation*}

\end{enumerate}

\end{proof}

\section{Few-shot policy generalization in linear models} 
\label{app:fewshotlinear}

\begin{example}[Few-shot policy generalization for linear CATE functions]
Let $S \in \mS^{\einv}_{\calEtr}$
and $N\coloneqq\{1,\ldots,d\}\setminus S$ and recall that $\calT = \{1,\dots,k\}$. We assume that $\calH$ is a class of 
linear functions parameterized by $\Theta \subseteq \R^{k \times d}$, i.e., $\calH \coloneqq \{\tau \mid \exists \theta
\in \Theta  \text{ s.t. } \forall t \in \calT: \tau(\cdot, t) \equiv \theta_t (\cdot)\}$.

By the linearity of $\calH$, Assumption~\ref{assm:few-shot} is satisfied, that is, there exists $\theta^S \in \Theta^S \subseteq \R^{k \times \abs{S}}$ and $\theta^N \in \Theta^N \subseteq \R^{k \times \abs{N}}$ such that
\begin{equation}
    \forall x \in \calX, t \in \calT: \tau_{e^{\tst}}(x, t) = \theta_t^S x^S + \theta_t^N x^N.
\end{equation}
Under Assumption~\ref{assm:2}, we then have
\begin{equation}
    \forall x \in \calX^S, t \in \calT: \theta_t^S x = \tau^S_{\tr}(x, t) - \theta_t^N \EX^{e^{\tst}}[X^N \mid X^S = x],
\end{equation}
and hence
\begin{equation}
    \forall x \in \calX, t \in \calT: \tau_{e^{\tst}}(x, t) = \tau^S_{\tr}(x^S, t) + \theta_t^N (x^N - \EX^{e^{\tst}}[X^N \mid X^S = x^S]).
\end{equation}
Next, let $\hat{q}$ be an estimator of
$\EX^{e^{\tst}}[X^N \mid X^S = \cdot]$ and, for all $\theta \in \Theta^N$, define $\tau_{\theta}^S: (x ,t) \mapsto \tau^S_{\tr}(x^S, t) + \theta_t(x^N - \hat{q}(x^S))$. Importantly, the estimand we are estimating by $\hat{q}$ does not change with $\theta^N$. 
We then consider the unconstrained optimization
\begin{equation}\label{eq:unconstrained_opt_lin}
    \hat{\theta} \in \argmin_{\theta \in \Theta^N} \sum_{i=1}^m \ell(Y^{\tst}_i, X^{\tst}_i, T^{\tst}_i, \pi^{\tst}_i, \tau_{\theta}^S).
\end{equation}
Here, by utilizing the e-invariance information $\tau^S_{tr}$ along with Assumption~\ref{assm:2}, we now optimize over the restricted function class $\Theta^N \subset \Theta$.
\end{example}

\end{document}